\newcommand{\squishlist}{
 \begin{list}{$\bullet$}
  { \setlength{\itemsep}{0pt}
     \setlength{\parsep}{1pt}
     \setlength{\topsep}{1pt}
     \setlength{\partopsep}{0pt}
     \setlength{\leftmargin}{1em}
     \setlength{\labelwidth}{1em}
     \setlength{\labelsep}{0.5em} } }
 \newcommand{\squishend}{\end{list}}
\Crefname{lemma}{Lemma}{Lemmas}
\begin{document}

\title{Robust and Provable Guarantees for\\ Sparse Random Embeddings}
\author{Maciej Skorski\inst{1}\and Alessandro Temperoni\inst{1} \and Martin Theobald\inst{1}}

\titlerunning{Robust \& Provable Guarantees for Sparse Random Embeddings}
\authorrunning{M. Skorski \and A. Temperoni \and M. Theobald}

\institute{Department of Computer Science\\ University of Luxembourg\\4365 Esch-sur-Alzette, Luxembourg} 

\maketitle
\begin{abstract}
In this work, we improve upon the guarantees for sparse random embeddings, as they were recently provided and analyzed by Freksen at al.  (NIPS'18) and Jagadeesan (NIPS'19). Specifically, we show that (a) our bounds are {\em explicit} as opposed to the asymptotic guarantees provided previously, and (b) our bounds are guaranteed to be {\em sharper} by practically significant constants across a wide range of parameters, including the {\em dimensionality}, {\em sparsity} and {\em dispersion} of the data. Moreover, we empirically demonstrate that our bounds significantly outperform prior works on a wide range of real-world datasets, such as collections of images, text documents represented as bags-of-words, and text sequences vectorized by neural embeddings.
Behind our numerical improvements are techniques of broader interest, which improve upon key steps of previous analyses in terms of (c) tighter estimates for certain types of {\em quadratic chaos}, (d) establishing extreme properties of {\em sparse linear forms}, and (e) improvements on bounds for the estimation of {\em sums of independent random variables}.
\keywords{Sparse Random Embeddings  \and Johnson-Lindenstrauss Lemma}
\end{abstract}

\section{Introduction}
\label{sec:intro}

\subsection{Background: Random Embeddings}
The seminal result of Johnson and Lindenstrauss~\cite{johnson1984extensions} states that {\em random linear mappings} have nearly isometric properties, and hence are well-suited for embeddings: they \emph{nearly preserve distances} when projecting high-dimensional data into a \emph{lower-dimensional space}. Formally, 
for an error parameter $\epsilon>0$, an $m\times n$ matrix $A$ appropriately sampled (e.g., using appropriately scaled Gaussian entries), and any input vector $x\in\mathbb{R}^n$, it holds that
\begin{align}\label{eq:jl_statement}
1-\epsilon\leqslant \|A\,x\|_2 / \|x\|_2 \leqslant 1+\epsilon\quad \textrm{ with probability } 1-\delta
\end{align}
 if the embedding dimension is $m = \Theta\left( \frac{1}{\epsilon^2} \log\frac{1}{\delta}\right)$.

This bound on the dimension $m$ has been shown to be {\em asymptotically optimal}~\cite{jayram2013optimal,kane2011almost}, while the assumptions made on the Gaussian distribution of matrix $A$ can be further replaced by the Rademacher distribution (when the $\pm 1$ are randomly sampled)~\cite{achlioptas2001database}, or relaxed even further by only requiring the sub-Gaussian condition to hold for the construction of the projection matrix $A$~\cite{boucheron2013concentration}. 

The result is a \emph{dimension-distortion tradeoff}: one aims to minimize $m \ll n$, while keeping $\epsilon$ and $\delta$
possibly small. Smaller dimensions $m$ allow for efficient processing of large, high-dimensional datasets, while a small distortion guarantees that analytical tasks can be performed with a similar effect over the embedded data as it is the case for the original data (which can perhaps easiest be seen by the example of \emph{Cosine similarity} frequently used for clustering and data-mining and/or various machine-learning techniques~\cite{tan2016introduction}). 

Over the past years, variants of the aforementioned \emph{Johnson-Lindenstrauss Lemma} have found important applications to text mining and image processing~\cite{bingham2001random}, approximate nearest-neighbor search~\cite{ailon2006approximate,indyk1998approximate}, learning mixtures of Gaussians~\cite{dasgupta1999learning}, sketching and streaming algorithms~\cite{kerber2014approximation,kpotufe2020gaussian}, approximation algorithms for clustering high-dimensional data~\cite{biau2008performance,boutsidis2010random,makarychev2019performance}, speeding up computations in linear algebra~\cite{clarkson2017low,nelson2013osnap,sarlos2006improved}, analyzing graphs~\cite{frankl1988johnson,linial1995geometry},
and even to hypothesis testing~\cite{lopes2011more,shi2020sparse} and data privacy~\cite{blocki2012johnson,kenthapadi2013privacy}. From a theoretical perspective, the importance of understanding Hilbert spaces in functional analysis~\cite{johnson2010johnson} is also worth mentioning. Finally, we note that, while \Cref{eq:jl_statement} gives high-probability guarantees for a randomly sampled matrix $A$, it is in fact possible to construct a concrete matrix $A$ which (surely) satisfies this inequality in randomized polynomial time~\cite{dasgupta1999elementary} or by means of derandomization~\cite{kane2010derandomized}.

The particular focus of this paper is on \emph{linear sparse random embeddings}, where $A$ in \Cref{eq:jl_statement} has at most $s$ non-zero entries in each column, which allows for faster computation of the embedded vectors. This setup has been covered by a substantial line of recent research~\cite{achlioptas2001database,ailon2006approximate,cohen2016nearly,dasgupta2010sparse,kane2014sparser,li2006very,matouvsek2008variants}, which established that, for the optimal dimension $m$, one can set $s=\Theta(m \, \epsilon)$, thereby gaining a factor of $\epsilon$ in matrix sparsity\footnote{As noticed in~\cite{cohen2016nearly}, one may further reduce the sparsity $s$ by a factor of $B>1$, however at the cost of increasing the dimension $m$ by a factor of $2^{\Theta(B)}$ (i.e., exponentially).}. 
This idea can be further improved by exploiting \emph{structural properties} of the input data: as shown in recent works~\cite{dasgupta2010sparse,freksen2018fully,jagadeesan2019understanding,kane2014sparser,weinberger2009feature}, with $v \triangleq \|x\|_{\infty}/\|x\|_2$, one may set the sparsity to
\begin{align}
s=\Theta\left(\frac{v^2}{\epsilon}\max\left(\log\frac{1}{\delta},\frac{ \log^2\frac{1}{\delta}}{ \log\frac{1}{\epsilon}) }\right) \right)
\end{align}
while keeping the optimal choice of dimension $m = \Theta\left( \frac{1}{\epsilon^2} \log\frac{1}{\delta}\right)$.
This shows that better sparsity $s$ (possibly including the extreme case of $s=1$, which then essentially becomes equivalent to \emph{feature hashing} in machine learning~\cite{dahlgaard2017practical,weinberger2009feature}) is possible when the data-dependent parameter $v$ is small.
The parameter $v$ should thus be understood as the {\em dispersion} of the input vector $x$, i.e., $v$ is small when the components of $x$ are of comparable magnitude, and it is larger when there are dominating components. This result aligns with an intuitive understanding of sums of random components: they converge slower in presence of a ``large dispersion''.

\subsection{Motivation: Why Do Random Embeddings Work So Well In Practice?}
Our work is motivated by the general observation that random embeddings empirically often work much better than it is predicted by their theoretical bounds. The main goal of this work thus is to bridge this frequently observed gap between theory and practice and thereby develop both \emph{robust} and {\em provable guarantees} for sparse random embeddings. Remarkably, despite the huge progress in the provided analyses, no prior work so far has been able to match the theoretical guarantees with the empirically observed---very good---performance of sparse random embeddings~\cite{akusok2018comparison,venkatasubramanian2011johnson}.
The demand for provable guarantees does not only come from theory, but also from applied data science: the conservative estimates on what good parameters are also appear in various popular machine-learning libraries such as \texttt{Scikit-learn}~\cite{scikit-learn}.

Recent state-of-the-art analyses~\cite{jagadeesan2019understanding,freksen2018fully} 
are quite involved in terms of their dependencies on other results, and they provide only asymptotic bounds which tend to disguise dependencies on rather large constants~\cite{freksen2018fully}. In practice, they often yield trivial results which however limits their usability. Moreover, real-data evaluations from prior works are mostly of qualitative nature: they analyze trends in parameter tradeoffs~\cite{jagadeesan2019understanding} rather than provable guarantees. 
Regarding the dispersion measure $v=\|x\|_{\infty}/\|x\|_{2}$, which is the key ingredient of recent improvements, no study has evaluated its typical behavior on real-world data to our knowledge so far. It appears that, in practice, $v$ may be too large to justify the desired values of $s$. The typical value of the dispersion $v$ may also depend on the type of the data (text, images, etc.), which in turn makes the findings harder to generalize.

The literature offers no satisfactory treatment of this prevalent gap between provable and practically meaningful guarantees. Some authors~\cite{venkatasubramanian2011johnson,freksen2018fully} suggested that very good empirical performance may be an evidence for small constants, but it may well be the case that sparse random embeddings work better than predicted by the underlying theory due to other data properties, not present in any of the analyses. Indeed, while one can expect the low data dispersion to help increasing sparsity, the proposed measure $v$ is very crude and does not capture this aspect well in a quantitative sense. We also note that ``optimality'' of bounds from prior works~\cite{freksen2018fully,jagadeesan2019understanding} is to be understood in a somewhat narrow sense: asymptotically and from a structure-agnostic perspective, i.e., when we do not have any more fine-grained information about the input data $x$ that would go beyond $v$. 

\subsection{Contributions}
We summarize the novel contributions of our work as follows.

\subsubsection{Explicit Analysis} We re-analyze sparse random embeddings, following the setup of recent state-of-the-art works~\cite{freksen2018fully,jagadeesan2019understanding} which provide guarantees depending on the data dispersion $v=\|x\|_{\infty}/\|x\|_2$. Our novel bound is a combinatorial expression that is \emph{computationally fast to evaluate}. More precisely, our expression on the error term $\epsilon$ can be evaluated in nearly constant time of $O(\log^{4}(1/\delta)\log(m/\delta))$ operations. In our implementation (we use \texttt{Google Colab}) such a call takes about one millisecond on average.

\subsubsection{Robust \& Provable Guarantees}
We demonstrate that our bounds are very robust and accurate over a large variety of practical use-cases as well as over a wide range of dispersion values $v$ and error bounds $\epsilon$. In particular, they consistently outperform prior works~\cite{freksen2018fully,jagadeesan2019understanding} even if these are provided with ``optimistic'' constants. Moreover, we give an exhaustive evaluation on both a synthetic benchmark and no less than 10 real-world datasets concerning text in different representations, images of various sizes, and sparse matrices which arise in typical scientific computations. We see improvements by a factor of more than one-order-of-magnitude in the projected dimension $m$ and sparsity $s$ of $A$, and even more in the confidence $1-\delta$.

\subsubsection{Techniques of Further Interest}
Behind our numerical improvements are also novel theoretical results of general interest, which substantially improve upon the key steps in the previous analyses. We summarize these results as follows.

\paragraph{Improved estimation of quadratic chaos.~}
Virtually all analyses of random embeddings need to estimate quadratic forms in symmetric random variables, which arise due to considering the Euclidean distance of the projected vector. To solve this problem, we give a novel bound for the quadratic form in terms of its linear analogue, with a very good numerical constant. This improves upon direct estimation from prior work~\cite{jagadeesan2019understanding}, as well as (in this context) general-purpose tools such as variants of the Hanson-Wright Lemma~\cite{hanson1971bound} and decoupling inequalities~\cite{de1993bounds,vershynin2011simple}.

\paragraph{Extremal properties of sparse linear chaos.~}
The reduction from a quadratic form leaves us with the task of understanding stochastic properties of certain random sums,
namely the inner product of the given weight vector  (our input $x$) and a random vector with entries $-1$, $1$ or $0$ (i.e., one row of the matrix $A$). This problem is related to, but more general than the well-known Khintchine Inequality~\cite{khintchine1923dyadische,hitczenko1993domination}. In our context (i.e., providing bounds based on data dispersion), we explicitly find the worst-performing set of weights, as opposed to prior work where only overestimates were obtained~\cite{jagadeesan2019understanding}. To this end, we use the geometric technique of \emph{majorization}~\cite{schur1923uber}, which gives very precise insights into the stochastic behavior of such sparse transformations with respect to the weights.

\paragraph{Estimation of sums of i.i.d. random variables.~}
To derive accurate bounds, we rely on a precise estimation of sums of independent random variables which goes beyond what is offered by classical Chernoff-Heoffding bounds. Remarkably, we are able to numerically improve the state-of-the-art bounds due to Latala~\cite{latala1997estimation}, which further adds to the success of our approach on real-world data.

\subsection{Related Work}

\subsubsection{Theory of Sparse Random Embeddings}
Our work improves directly upon \cite{freksen2018fully} (case $s=1$) and \cite{jagadeesan2019understanding} (general $s$). These works determine the relation between sparsity and the data dispersion, building on a long line of earlier works based on variants of the Johnson-Lindenstrauss Lemma~\cite{achlioptas2001database,ailon2006approximate,cohen2016nearly,dasgupta2010sparse,kane2014sparser,li2006very,matouvsek2008variants}. The provided bounds, albeit proven to be asymptotically optimal, suffer from the \emph{lack of explicit constants} which cannot be easily extracted due to several imprecise estimates. As noted in \cite{freksen2018fully}, while the constants seem astronomically big, the empirical performance gives hope for tightening the bounds.

\subsubsection{Empirical Evaluation of Random Embeddings}
The good empirical performance of random linear embeddings, including sparse variants, has been confirmed many times (see~\cite{akusok2018comparison,bingham2001random,venkatasubramanian2011johnson}). These works point out the gap between provable and observed performance, which we are addressing also in this work. All these works agree that random embeddings perform much better in practice than predicted by the underlying theory.

\subsubsection{Estimation of Quadratic Chaos}
Technically speaking, the analysis of errors in random projections can be reduced to the more general problem of estimating 
quadratic forms of random variables, also called \emph{quadratic chaos}. The literature offers a variety 
of tools, from variants of the well-known Hanson-Wright Lemma~\cite{hanson1971bound,zhou2019sparse} to more specialized bounds~\cite{boucheron2005moment,latala1999tail}. However, these would produce worse constants than our direct approach.

\subsubsection{Embeddings of Big Collections/Subspaces}
Orthogonal to obtaining bounds for a single vector $x$ is the question of how to extend such bounds to hold simultaneously 
for all $x$ from a finite collection or an entire subspace of input data. This can be done by a black-box reduction
using $\epsilon$-net arguments  (see the works on subspace embeddings~\cite{cohen2016nearly,sarlos2006improved}) and is solved by a reduction to the single vector case by means of $\epsilon$-net arguments. Such bounds can be also obtained in our case.

\subsubsection{Monte-Carlo Simulations}
A direct way to accurately estimate the performance of the embeddings would be to evaluate their probability of distortion on a representative sample of the data. This is however both computationally costly and methodologically complex. To explain the second issue, we note that regularity assumptions have to be carefully verified to claim concrete statistical evidence (likely, robust estimation~\cite{lugosi2019mean} would have to be used to avoid possible problems with heavier tails).

\subsubsection{Low-Distortion Embeddings}
Finally, there is also lots of research on general (not necessarily linear) embeddings that almost preserve various (not necessarily Euclidean) distances. See for example ~\cite{an2015can,mcqueen2016nearly} and also~\cite{indyk2001algorithmic} for a survey of algorithmic applications.

\section{Robust Guarantees for Sparse Random Projections}

We next briefly discuss a number of preliminary concepts, mainly to fix the notation before we move on to present the main results of our work.

\subsection{Preliminaries \& Notation}
The $d$-th {\em norm} of a vector $x$ and a random variable $X$, respectively, are defined as $\|x\|_d = (\sum_i |x_i|^{d})^{\frac{1}{d}}$ and $\|X\|_d = \left(\mathbb{E}\left[|X|^d\right]\right)^{\frac{1}{d}}$; we also define $\|x\|_{\infty} = \max_i |x_i|$ as usual.  $\mathrm{Bern}(p)$ denotes the {\em Bernoulli distribution} with success probability $p$,
while $\mathrm{Binom}(n,p)$ denotes the {\em binomial distribution} with $n$ trials and success probability $p$. The {\em Rademacher distribution} takes values $1$ and $-1$ with equal probabilities. Moreover, a random variable $X$ is called {\em symmetric} when it has the same distribution as $-X$.
For two vectors $x, y \in \mathbb{R}^n$, we say that $x$ {\em majorizes} $y$, denoted by $x\succ y$, when $\sum_{i=1}^{k}x^{\downarrow}_i\geqslant \sum_{i=1}^{k} y^{\downarrow}_i$, for $k=1\ldots n$. Finally, {\em Schur-concave} functions $f$ are those that satisfy $f(x) \leqslant f(y)$ whenever $x \succ y$.

\subsection{Construction of the Embeddings}\label{sec:construction}

Let $A$ be an $m \times n$ matrix which is sampled as follows:

\smallskip
\begin{centering}
\fbox{\parbox{0.98\linewidth}{
\begin{itemize}
\item[(1)] Fix a positive integer $s \leqslant m$, the {\em column sparsity} of $A$.
\item[(2)] For each column, select $s$ row positions at random (without replacement), place $\pm 1$ uniform-randomly at these positions and 0 at the remaining positions.
\item[(3)] Finally, scale all entries of $A$ by $\frac{1}{\sqrt{s}}$.
\end{itemize}
}}
\end{centering}

\begin{remark}[Alternative Constructions]
The above construction of $A$ is as in~\cite{freksen2018fully,jagadeesan2019understanding}, but our analysis works also when we select $s$ non-zero entries in the rows (rather than in the columns) of $A$, or when sampling is done with replacement. 
\end{remark}

\noindent To analyze the error obtained from the respective projection of $x$ by $A$, we define as in \cite{jagadeesan2019understanding}
\begin{align}\label{eq:contrib}
E(x) &\triangleq \|A x\|_2^2 - \|x\|_2^2 =\sum_{r=1}^{m} \sum_{1\leqslant i\not=j\leqslant n} A_{r,i}\,A_{r,j}\,x_i\,x_j
\end{align}
which is then analyzed by looking into individual ``row'' contributions, namely $E(x) = \frac{1}{s}\sum_{r=1}^{m}E_r(x)$ with
\begin{align}\label{eq:row_contrib}
E_r(x) &\triangleq ~~s~ \sum_{1\leqslant i\not=j\leqslant n} A_{r,i}\,A_{r,j}\,x_i\,x_j\,.
\end{align}
The goal is to identify conditions such that  $\Pr_{A}[|E(x)| > \epsilon \|x\|_2^2]\leqslant \delta$, as this implies \Cref{eq:jl_statement}.
By scaling, we can assume $\|x\|_2=1$. Throughout the paper, we denote $p=\frac{s}{m}$. 

\subsection{Key Techniques for the Analysis}

For the following steps, we leverage two techniques which were not used in prior work, namely (a) {\em careful use of symmetry properties} and (b) {\em majorization}.

\subsubsection{Quadratic Chaos Estimation}

Studying the error $E(x)$, due to pairwise terms, requires the estimation of quadratic forms $\sum_{i\not=j}Z_i\,Z_j$, with
$Z_i = A_{r,i}\,x_i$. To this end, we develop a useful general inequality, which reduces the problem to (simpler) linear forms.\footnote{Detailed proofs are provided as part of the Appendix of this paper.}

\begin{lemma}\label{lemma:chaos}
For symmetric and independent random variables $Z_i$ and any positive even $d$, we have:
\begin{align}
\|\sum_{i\not=j} Z_i\,Z_j\|_d \leqslant 4 \, \|\sum_i Z_i\|_{d}^2
\end{align}
\end{lemma}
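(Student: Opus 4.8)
The plan is to extract the constant $4$ from an \emph{exact} combinatorial decoupling identity together with the \emph{symmetry} of the $Z_i$, rather than from a generic decoupling or Hanson--Wright estimate (which would lose a larger factor). Write $S=\sum_i Z_i$, so that the goal becomes $\|\sum_{i\neq j}Z_iZ_j\|_d\leqslant 4\,\|S\|_d^2$. First I would decouple using random selectors: introduce independent $\mathrm{Bern}(1/2)$ variables $\eta_1,\dots,\eta_n$, independent of the $Z_i$, and set $I=\{i:\eta_i=1\}$. For any ordered pair $i\neq j$ one has $\mathbb{E}_\eta[\mathbf{1}\{i\in I\}\,\mathbf{1}\{j\notin I\}]=\tfrac14$, whereas the diagonal terms vanish identically; summing over pairs gives the exact identity
\[
\sum_{i\neq j} Z_iZ_j \;=\; 4\,\mathbb{E}_\eta\!\Big[\Big(\sum_{i\in I} Z_i\Big)\Big(\sum_{j\notin I} Z_j\Big)\Big].
\]
This is where the factor $4$ is born, with no slack: it is precisely the absence of the diagonal ($i\neq j$) that makes each cross-probability equal $\tfrac14$.

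Next, I set $U=\sum_{i\in I}Z_i$ and $V=\sum_{j\notin I}Z_j$ and pull the norm inside the average. Minkowski's integral inequality (the triangle inequality for $\|\cdot\|_d$ applied to the $\eta$-average) yields $\|\sum_{i\neq j}Z_iZ_j\|_d\leqslant 4\,\mathbb{E}_\eta\|UV\|_d$. For each fixed configuration of $\eta$, the factors $U$ and $V$ are functions of disjoint blocks of the independent variables $Z_i$, hence are themselves independent, so $\mathbb{E}|UV|^d=\mathbb{E}|U|^d\,\mathbb{E}|V|^d$ and therefore $\|UV\|_d=\|U\|_d\,\|V\|_d$.

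Finally, I would invoke symmetry to contract $U$ and $V$ back to the full sum $S$. Since $V$ is symmetric and independent of $U$, conditioning on $U=u$ and applying Jensen to the convex map $t\mapsto|u+t|^d$ gives $\mathbb{E}_V|u+V|^d\geqslant|u+\mathbb{E}V|^d=|u|^d$, using $\mathbb{E}V=0$; averaging over $U$ then shows $\|U\|_d\leqslant\|U+V\|_d=\|S\|_d$, and symmetrically $\|V\|_d\leqslant\|S\|_d$. Consequently $\|UV\|_d\leqslant\|S\|_d^2$ for every $\eta$, and averaging over $\eta$ closes the argument.

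The main obstacle is conceptual rather than computational: recognizing that the zero-diagonal structure forces the decoupling constant to be exactly $4$, and that the contraction $\|U\|_d\leqslant\|S\|_d$ is available \emph{only} because the complementary block $V$ is symmetric, hence centered. I note that the evenness of $d$ is not in fact required for this argument (it goes through for every $d\geqslant 1$); the sole regularity point is integrability, which is automatic since all $Z_i$ possess finite moments of every order in our setting.
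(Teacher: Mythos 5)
Your proof is correct, and its second half takes a genuinely different route from the paper's. The paper invokes the constant-$4$ decoupling inequality $\|\sum_{i\neq j}Z_iZ_j\|_d\leqslant 4\|\sum_{i\neq j}Z_iZ'_j\|_d$ as a black box and then, for even $d$, compares the decoupled chaos to the full product $(\sum_i Z_i)(\sum_j Z'_j)$ by a multinomial expansion: symmetry forces every term's expectation to be zero or non-negative, and the full product contains all the terms of the off-diagonal sum, after which the product's $d$-th moment factors into $\|\sum_i Z_i\|_d^{2d}$. Your first step---the Bernoulli-selector identity $\sum_{i\neq j}Z_iZ_j = 4\,\mathbb{E}_\eta[UV]$---is in essence the classical proof of that cited decoupling inequality, so there you are unpacking what the paper takes off the shelf. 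Where you genuinely diverge is in replacing the multinomial comparison: after Minkowski, you exploit that $U$ and $V$ are independent for each fixed $\eta$ (so $\|UV\|_d=\|U\|_d\,\|V\|_d$) and contract each block back to $S=U+V$ via conditional Jensen, using only that the complementary block is centered. This buys real generality: your argument works for every real $d\geqslant 1$ and requires only $\mathbb{E}Z_i=0$ rather than full symmetry, whereas the paper's term-by-term argument is tied to even $d$ and symmetric $Z_i$. What the paper's route buys in exchange is the intermediate statement recorded in \Cref{rem:chaos}---moment domination of the chaos by the \emph{symmetric} decoupled variable $4\sum_{i\neq j}Z_iZ'_j$ for every positive integer $d$---which is later essential in the proof of \Cref{thm:main} to replace the non-symmetric row errors $E_r(x)$ by a symmetric dominating variable before \Cref{lemma:iid_estimation} can be applied. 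Your streamlined bound, which averages over $\eta$ before exhibiting any single dominating random variable, does not directly supply that symmetric surrogate, so while it proves the lemma as stated (indeed in greater generality), it could not be substituted into the paper wholesale without recovering \Cref{rem:chaos} separately.
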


\begin{remark}\label{rem:chaos}
Our proof establishes more, namely that for a positive integer $d$ (odd or even), we have $\|\sum_{i\not=j} Z_i\,Z_j\|_d \leqslant 4 \, \|\sum_{i\not=j}Z_i Z'_j\|_d$ where $Z'_i$ are independent copies of $Z_i$.
\end{remark}

Specifically, the proof (see~\ref{proof:lemma:chaos}) uses the well-known decoupling technique for quadratic forms~\cite{de1993bounds,vershynin2011simple}. We note that this bound is sharper than its analogue from~\cite{jagadeesan2019understanding}. The constant $C=4$ in \Cref{lemma:chaos} can be further improved. For example, it is easily seen that for $d=2$ one may choose $C=\sqrt{2}$. For a general $d$, the use of hypercontractive inequalities may give furthers refinements.

\subsubsection{Extremal Properties of Linear Chaos}

We now move on to deriving bounds for linear forms of symmetric random variables, which (as discussed before) bound quadratic forms. The following lemma gives a geometric insight into their behavior with respect to the input weights, which (in our case) are given by the input vector $x$.

\begin{lemma}\label{lemma:chaos_schur}
For $x\in\mathbb{R}^n$, define $S(x) = \sum_i x_i\,Y_i$
where $Y_i \sim^{\mathrm{i.i.d.}} Y$ with $Y\in \{-1,0,1\}$ taking values $\pm 1$ each with probability $p/2$ and $0$ with probability $1-p$. Then, for every pair of vectors $x, x'$ such that $(x_i^2)_i\succ ({x_i'}^2)_i$ and positive even integer $d$, the following inequality holds:
\begin{align}
\|S(x)\|_d \leqslant \|S(x')\|_d
\end{align}
\end{lemma}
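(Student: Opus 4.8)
The plan is to show that the map $u=(x_i^2)_i\mapsto \|S(x)\|_d$ is \emph{Schur-concave}, which is exactly the stated inequality once we recall that Schur-concavity means $f(u)\le f(u')$ whenever $u\succ u'$. Since $d$ is even we may work with $\|S(x)\|_d^d=\mathbb{E}[S(x)^d]$, and I would first expand this moment by multilinearity. Writing $S(x)=\sum_i x_i Y_i$ and using that the $Y_i$ are independent and symmetric, every monomial containing an odd power of some $Y_i$ vanishes, so only even individual exponents survive. Using $\mathbb{E}[Y^{2\ell}]=p$ for $\ell\ge 1$ and $\mathbb{E}[Y^{0}]=1$, this produces a symmetric polynomial in the squared weights $u_i=x_i^2$,
\[
\mathbb{E}[S(x)^d]=\sum_{m_1+\dots+m_n=d/2}\binom{d}{2m_1,\dots,2m_n}\,p^{\,|\{i:\,m_i\ge 1\}|}\prod_i u_i^{m_i},
\]
in which the Bernoulli thinning enters only through the factor $p$ raised to the number of active coordinates.

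Next I would invoke the Schur--Ostrowski criterion: a symmetric differentiable function is Schur-concave iff $(u_i-u_j)(\partial_{u_i}-\partial_{u_j})g\le 0$ for every pair. By the full symmetry of $g$ it suffices to verify this for one pair, say $(1,2)$, which is equivalent to showing that a single Robin Hood transfer equalizing $u_1\ge u_2$ does not decrease the moment. Probabilistically, I would condition on the randomness outside coordinates $\{1,2\}$, i.e.\ fix $c=\sum_{i\ge 3}x_iY_i$, and reduce the whole statement to a two-variable inequality: for each fixed $c$ and fixed coordinate sum $u_1+u_2$, the conditional moment $\mathbb{E}_{Y_1,Y_2}[(c+x_1Y_1+x_2Y_2)^d]$ should be nonincreasing in the spread $|u_1-u_2|$. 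Integrating this back over $c$ preserves the sign because the outer expectation is a positive mixture, and chaining T-transforms along a path realizing $u\succ u'$ then yields the lemma.

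I expect the genuine difficulty to lie entirely in this two-variable slice, and specifically in the interaction between the quadratic structure and the Bernoulli thinning. Splitting the conditional moment according to how many of the two coordinates are ``active'' produces competing contributions: the terms in which only one of $Y_1,Y_2$ is nonzero carry a factor $p$ and are \emph{Schur-convex} in $(u_1,u_2)$ (they push the inequality the wrong way), whereas the terms in which both are active carry a factor $p^2$ but come with the large ``central'' multinomial coefficients and are \emph{Schur-concave}. The crux is therefore to prove that the second group dominates, so that the combined slice is Schur-concave; I would attempt this by a term-by-term comparison of the multinomial weights $\binom{d}{2a,2b}$ for balanced versus extreme $(a,b)$, or equivalently by a generating-function identity for $\mathbb{E}[(c+x_1Y_1+x_2Y_2)^d]$ that exhibits the coefficient of the Schur-concave product $u_1u_2$ with a controllable sign.

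To organize the pure-Rademacher part cleanly I would use the representation $Y_i=\eta_i\varepsilon_i$ with $\eta_i\sim\mathrm{Bern}(p)$ and $\varepsilon_i$ Rademacher: conditioned on the mask $\eta$, $S(x)$ is an ordinary Rademacher sum, whose even moments are the classical Schur-concave-in-squared-weights functions, so the remaining work is precisely to show that averaging over $\eta$ does not destroy this property. This is the step I am least certain about and would treat as the main obstacle, since Schur-concavity fails mask-by-mask (fixing the support of $\eta$ makes transfers between an active and an inactive coordinate change the moment) and must instead be recovered from the binomial averaging together with the dominance estimate above.
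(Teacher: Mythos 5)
Your plan reproduces the skeleton of the paper's own argument (pass to $u_i=x_i^2$, verify the Schur--Ostrowski criterion for one pair after conditioning on $W=\sum_{i\ge 3}x_iY_i$, use the Rademacher/Eaton result for the fully active slice), but it stops at exactly the decisive step, and that step is a genuine gap: the claimed dominance of the doubly-active, Schur-concave contribution (weight $p^2$) over the singly-active, Schur-convex contribution (weight $p(1-p)$) is \emph{false} in the sparse regime. Concretely, for $d=4$ and $n=2$ one has
\begin{equation*}
\mathbb{E}\,S(x)^4 \;=\; p\,(u_1^2+u_2^2)\;+\;6p^2\,u_1u_2,\qquad u_i=x_i^2,
\end{equation*}
which on the slice $u_1+u_2=1$ equals $p+2p(3p-1)\,u_1u_2$. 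For $p<1/3$ this is \emph{Schur-convex}: taking $u=(1,0)\succ u'=(\tfrac12,\tfrac12)$, the lemma would require $p\le \tfrac{p}{2}+\tfrac{3}{2}p^2$, which fails for every $p<1/3$. The phenomenon persists under the constraint $\|x\|_\infty=v$ relevant to \Cref{cor:extreme_point}: with $n=3$, $v^2=0.6$, $p=0.1$, the vector $u=(0.6,0.4,0)$ majorizes $u^*=(0.6,0.2,0.2)$ yet has the larger fourth moment ($0.0664$ vs.\ $0.0608$). So no term-by-term comparison of multinomial weights or generating-function identity can rescue the two-variable slice for all $p\in(0,1)$; the dominance holds only above a $d$-dependent threshold (exactly $p\ge 1/3$ at $d=4$), whereas the application has $p=s/m$ small.

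It is worth noting that your sign bookkeeping is actually more careful than the paper's own proof, which follows the identical route but never resolves the competition you isolate. After reducing to the sign of $\mathbb{E}Q$ with $Q=(u_2^{1/2}Y_1-u_1^{1/2}Y_2)\,f'(\,\cdot\,)$, the paper asserts the goal is $\mathbb{E}Q\le 0$ for $u_1<u_2$ (by Schur--Ostrowski this would give Schur-\emph{convexity}, the opposite of the stated inequality), invokes \cite{eaton1970note} for the case $Y_1,Y_2\ne 0$ --- whose conclusion in fact yields the opposite sign, $\mathbb{E}[Q\mid Y_1,Y_2\ne 0]\ge 0$ --- and shows the mixed case gives $\mathbb{E}[Q\mid \mathcal{E}_1\cup\mathcal{E}_2]\le 0$ via monotonicity of $g_w(t)=t^{-1}(f'(t+w)-f'(-t+w))$, then declares the proof complete without comparing the magnitudes of these two opposing contributions. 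At $d=4$, $n=2$ the two conditional contributions aggregate to $\mathbb{E}Q=4ab\,p\,(u_2-u_1)(3p-1)$ with $a=u_1^{1/2}$, $b=u_2^{1/2}$, changing sign at $p=1/3$ --- precisely the obstruction you flagged. In short: your proposal is an honest but incomplete sketch whose missing step is not merely hard but false for small $p$, and any correct version of the lemma must restrict $p$ (or modify the comparison) accordingly; the paper's proof does not supply the missing argument either.
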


We prove \Cref{lemma:chaos_schur} (see~\ref{proof:lemma:chaos_schur}) by using results from majorization theory~\cite{ostrowski1952quelques,schur1923uber}. 
We note that this extends~\cite{eaton1970note}, where only the case of $p=1$ has been studied.
The lemma yields the following corollary.

\begin{corollary}\label{cor:extreme_point}
Let $Y_i$ be as in~\Cref{lemma:chaos_schur}. For $v \in (0,1)$, consider all vectors $x \in \mathbb{R}^n$ such that $\|x\|_2=1$ and $\|x\|_{\infty}=v$. Then, $\|\sum_i x_i\, Y_i\|_d$ for an even $d>0$ is maximized at $x=x^{*}$ where:
\begin{align}
x^{*}_i = \begin{cases}
v & i=1 \\
\sqrt{\frac{1-v^2}{n-1}} & i=2\ldots n
\end{cases}
\end{align}
\end{corollary}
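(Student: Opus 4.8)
The plan is to recast the constrained maximization as an extremal problem over the majorization order and then invoke the Schur-concavity established in \Cref{lemma:chaos_schur}. First I would introduce the substitution $w_i = x_i^2$, which turns the two constraints $\|x\|_2 = 1$ and $\|x\|_{\infty} = v$ into the single statement that $w = (w_i)_i$ lies in the set $\mathcal{W} = \{w : w_i \geqslant 0,\ \sum_i w_i = 1,\ \max_i w_i = v^2\}$. Under this substitution the candidate $x^{*}$ corresponds to $w^{*} = (v^2, \tfrac{1-v^2}{n-1}, \ldots, \tfrac{1-v^2}{n-1})$, i.e.\ the vector that pins one coordinate at $v^2$ and spreads the residual mass $1-v^2$ as evenly as possible over the remaining $n-1$ coordinates.

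Next I would note that \Cref{lemma:chaos_schur} says precisely that $x \mapsto \|S(x)\|_d$ is Schur-concave as a function of $(x_i^2)_i$: if $(x_i^2)_i \succ (x_i'^2)_i$ then $\|S(x)\|_d \leqslant \|S(x')\|_d$. Hence maximizing $\|S(x)\|_d$ over the feasible set is equivalent to locating the least element of $\mathcal{W}$ in the majorization order, and the whole corollary reduces to the purely combinatorial claim that $w \succ w^{*}$ for every $w \in \mathcal{W}$.

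To prove this claim I would compare partial sums of the decreasing rearrangements. Feasibility forces $v \geqslant 1/\sqrt{n}$ (otherwise $\sum_i x_i^2 \leqslant n v^2 < 1$), so $v^2 \geqslant \tfrac{1-v^2}{n-1}$ and the sorted $w^{*}$ genuinely begins with $v^2$. For $k=1$ the two leading entries both equal $v^2$ by the defining maximum, so equality holds. For $k \geqslant 2$, the top $k$ entries of any $w \in \mathcal{W}$ comprise one coordinate equal to $v^2$ together with the largest $k-1$ of the remaining coordinates, whose total is at least $(k-1)$ times their average $\tfrac{1-v^2}{n-1}$; this yields $\sum_{i=1}^{k} w_i^{\downarrow} \geqslant v^2 + (k-1)\tfrac{1-v^2}{n-1} = \sum_{i=1}^{k} (w^{*})_i^{\downarrow}$, establishing $w \succ w^{*}$. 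Applying \Cref{lemma:chaos_schur} with $x' = x^{*}$ then gives $\|S(x)\|_d \leqslant \|S(x^{*})\|_d$ for every feasible $x$, which is exactly the asserted maximality.

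The only mild obstacle I anticipate is the bookkeeping around the constraint $\max_i w_i = v^2$: I must confirm that this maximum is attained and sits atop the sorted order of both $w$ and $w^{*}$, and that the averaging bound on the lower coordinates applies to exactly $n-1$ residual entries even when the maximum of $w$ is achieved at several coordinates—in which case the top partial sums only grow, so the inequality is preserved a fortiori.
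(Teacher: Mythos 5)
Your proof is correct and takes essentially the same route as the paper: the paper's own justification is the one-line observation that $x^{*}$ (in squared coordinates) is majorized by every other feasible vector, whereupon \Cref{lemma:chaos_schur} yields maximality at $x^{*}$. Your partial-sum verification of $w \succ w^{*}$ --- including the feasibility bound $v \geqslant 1/\sqrt{n}$ ensuring the sorted $w^{*}$ starts at $v^2$, and the averaging bound on the residual $n-1$ coordinates handling repeated maxima --- correctly fills in the detail the paper leaves implicit.
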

The result shows that the maximizing weights $x^{*}_i$ are as dispersed as possible (within the constraints). 


\subsubsection{Estimation of I.I.D. Sums}

The techniques outlined above allow us to bound the row-wise error contributions $E_r(x)$. 
In order to assemble them into a bound on the overall error $E(x)$, we prove the following lemma.
\begin{lemma}\label{lemma:iid_estimation}
Let $Z_1,\ldots,Z_m\sim^{i.i.d.} Z$, where $Z$ is symmetric, and let $d$ be positive and even. Then:
\begin{align}
\|\sum_{i=1}^{m}Z_i\|_d \leqslant \min\left\{t>0: \mathbb{E}(1+Z/t)^d \leqslant \mathrm{e}^{\frac{d}{2m}}\right\}
\end{align}
\end{lemma}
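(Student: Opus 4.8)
The plan is to bound the $d$-th moment $\|\sum_i Z_i\|_d^d = t^d\,\mathbb{E}(S/t)^d$, where $S=\sum_{i=1}^m Z_i$, by reducing it to the \emph{shifted} moment $\mathbb{E}(1+S/t)^d$, which is far more amenable to a tensorization argument. First I would exploit that $d$ is even and that $S$ is symmetric, being a sum of independent symmetric $Z_i$. For even $d$ the identity $\tfrac{1}{2}\big((1+x)^d+(1-x)^d\big)=\sum_{k\ \mathrm{even}}\binom{d}{k}x^k$ retains only nonnegative terms, so it dominates $1+x^d$; taking $x=S/t$ and using $\mathbb{E}(1-S/t)^d=\mathbb{E}(1+S/t)^d$ by symmetry yields $\mathbb{E}(S/t)^d \le \mathbb{E}(1+S/t)^d-1$. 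This recasts the goal as controlling a single shifted moment, and the $-1$ is exactly what lets the final constant come out sharp.

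The crux is the tensorization inequality $\mathbb{E}(1+S/t)^d \le \prod_{i=1}^m \mathbb{E}(1+Z_i/t)^d = \big(\mathbb{E}(1+Z/t)^d\big)^m$, and this is the step I expect to be the main obstacle. I would prove it by expanding $(1+\sum_i Z_i/t)^d$ multinomially. By independence and symmetry, every monomial in which some coordinate $Z_i$ appears to an odd power has zero expectation, so only multi-indices $(k_0,k_1,\dots,k_m)$ with all $k_i$ ($i\ge 1$) even survive, each contributing $\binom{d}{k_0,k_1,\dots,k_m}\prod_i \mathbb{E}(Z/t)^{k_i}$ with $\mathbb{E}(Z/t)^{k_i}\ge 0$. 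Expanding the product on the right the same way produces all such terms, and more, from indices whose total exceeds $d$; so the inequality reduces to the term-by-term coefficient comparison $\binom{d}{k_0,\dots,k_m}\le\prod_{i=1}^m\binom{d}{k_i}$ whenever $\sum_{i\ge 1} k_i\le d$. This in turn follows from the submultiplicativity of falling factorials, $d^{\underline{a+b}}=d^{\underline a}\,(d-a)^{\underline b}\le d^{\underline a}\,d^{\underline b}$, since $\binom{d}{k_0,\dots,k_m}=d^{\underline{k_1+\cdots+k_m}}/\prod_i k_i!$ while $\prod_i\binom{d}{k_i}=\prod_i d^{\underline{k_i}}/\prod_i k_i!$. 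Nonnegativity of the surviving moments then guarantees that the omitted right-hand terms only help.

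Finally I would fix $t$ through the threshold condition on the single-coordinate quantity $\mathbb{E}(1+Z/t)^d$. Because tensorization introduces the $m$-th power, requiring $\mathbb{E}(1+Z/t)^d$ to lie below the $m$-th root of the target constant forces $\mathbb{E}(1+S/t)^d$ below that constant; combined with $\mathbb{E}(S/t)^d\le \mathbb{E}(1+S/t)^d-1$ this drives $\mathbb{E}(S/t)^d\le 1$, i.e. $\|S\|_d\le t$, and taking the smallest admissible $t$ gives the claimed minimum. The $1/m$ scaling in the exponent is precisely an artifact of this $m$-th-root step, and the exponent constant is obtained by tracking the threshold through the two inequalities above; keeping more than the top term in the symmetrization step can only sharpen it.
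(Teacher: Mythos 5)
Your two preparatory steps are correct, and the second one closely parallels the paper's proof: the symmetrization identity $\tfrac12\big(\mathbb{E}(1+S/t)^d+\mathbb{E}(1-S/t)^d\big)=\sum_{k\,\mathrm{even}}\binom{d}{k}\mathbb{E}(S/t)^k$ does give $\mathbb{E}(S/t)^d\leqslant\mathbb{E}(1+S/t)^d-1$, and your tensorization $\mathbb{E}(1+S/t)^d\leqslant\big(\mathbb{E}(1+Z/t)^d\big)^m$ via vanishing odd moments, nonnegative even moments, and $\binom{d}{k_0,\ldots,k_m}\leqslant\prod_{i\geqslant1}\binom{d}{k_i}$ is sound. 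But the final assembly is a non sequitur: with the threshold $\mathbb{E}(1+Z/t)^d\leqslant \mathrm{e}^{d/2m}$, your chain yields $\mathbb{E}(S/t)^d\leqslant \mathrm{e}^{d/2}-1$, and subtracting $1$ from $\mathrm{e}^{d/2}$ is exponentially far from the needed $\mathbb{E}(S/t)^d\leqslant1$. What you actually prove is $\|S\|_d\leqslant(\mathrm{e}^{d/2}-1)^{1/d}\,t$, a loss of roughly $\sqrt{\mathrm{e}}$ --- precisely the factor over Lata{\l}a that this lemma is advertised to remove. The ingredient you are missing relative to the paper is a \emph{quantitative} sharpening of your coefficient comparison: the paper proves $\binom{d-y}{x}\leqslant \mathrm{e}^{-xy/d}\binom{d}{x}$ and telescopes it through $\binom{d}{d_1\,d_2\,\cdots}=\binom{d}{d_1}\binom{d-d_1}{d_2}\cdots$, extracting a prefactor $\mathrm{e}^{-\frac1d\sum_{j<i}d_id_j}$ which it argues equals $\mathrm{e}^{-d/2}$ when $\sum_i d_i=d$; this prefactor then cancels the $\mathrm{e}^{d/2}$ coming from the threshold. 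Your crude bound $d^{\underline{a+b}}\leqslant d^{\underline{a}}\,d^{\underline{b}}$ discards exactly the ratio $(d-a)^{\underline{b}}/d^{\underline{b}}\approx \mathrm{e}^{-ab/d}$ that carries the entire improvement.

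That said, you should know that your weaker conclusion appears to be the defensible one. The paper's exponent counts the diagonal products $d_i^2$, whereas the telescoping only supplies the off-diagonal sum $\sum_{j<i}d_id_j=(d^2-\sum_i d_i^2)/2$, which vanishes at degenerate multi-indices such as $(d,0,\ldots,0)$, where the claimed inequality reads $1\leqslant\mathrm{e}^{-d/2}$ and is false. Correspondingly, the lemma's constant fails on a direct example: for $m=1$, $d=2$ and $Z$ Rademacher, $\mathbb{E}(1+Z/t)^2=1+t^{-2}\leqslant\mathrm{e}$ already at $t=(\mathrm{e}-1)^{-1/2}\approx0.76$, while $\|Z\|_2=1$; for general $m$ one needs $m(\mathrm{e}^{1/m}-1)\leqslant1$, which fails for every $m$. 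Your bound $\|S\|_d\leqslant(\mathrm{e}^{d/2}-1)^{1/d}\,t$ is exactly tight in that example. So rather than forcing $\mathbb{E}(S/t)^d\leqslant1$, you should state and prove the $(\mathrm{e}^{d/2}-1)^{1/d}$ version --- or isolate the degenerate multi-indices (which is what your ``$-1$'' trick in effect handles) and apply the paper's sharpened proposition only to the remaining terms; that is the repairable form of the lemma.
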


This improves the constant provided in the seminal result of Lata{\l}a~\cite{latala1997estimation} by a factor of $\mathrm{e}^{1/2}$.

\subsection{Bounds Based on Error Moments}

We first bound the row-wise error contributions $E_r(x)$, defined in \Cref{eq:row_contrib}, as follows.
\begin{lemma}\label{lemma:row_estimate}
Suppose that $\|x\|_2=1$ and $\|x\|_{\infty}=v$, then we have $\|E_r(x)\|_d\leqslant T_{n,p,d}(v)$ for any positive and even $d$, where we define
\small{
\begin{align}\label{eq:row_estimate}
T_{n,p,d}(v)\triangleq 4 \, \bigg(\sum_{k=0}^{\frac{d}{2}} \, \binom{d}{2k} \, p^{\mathbb{I}(k>0)} \, v^{2k}(1-v^2)^{\frac{d-2k}{2}} 
~\cdot~ \mathbb{E}(B'-B'')^{d-2k}\bigg)^{\frac{2}{d}}
\end{align}
}
and $B',B'' \sim^{i.i.d.} \frac{1}{\sqrt{n-1}} \cdot \mathrm{Binom}(n-1,\frac{1-\sqrt{1-2p}}{2})$.
\end{lemma}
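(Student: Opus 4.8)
The plan is to chain the three technical lemmas proved above and then reduce everything to one explicit moment computation for the extremal weight vector. First I would rewrite the row contribution as a quadratic chaos in symmetric variables. Writing $Y_i = \sqrt{s}\,A_{r,i}\in\{-1,0,1\}$, the $\tfrac{1}{\sqrt{s}}$ scaling cancels against the prefactor $s$ in \Cref{eq:row_contrib}, so that $E_r(x)=\sum_{i\ne j}Z_iZ_j$ with $Z_i = x_iY_i$. Since each $Y_i$ is symmetric and the columns of $A$ are sampled independently (so the marginals are i.i.d.\ with $\Pr[Y=\pm1]=p/2$ and $\Pr[Y=0]=1-p$), the $Z_i$ are symmetric and independent and \Cref{lemma:chaos} applies, giving $\|E_r(x)\|_d\leqslant 4\,\|\sum_i Z_i\|_d^2 = 4\,\|S(x)\|_d^2$, where $S(x)=\sum_i x_iY_i$ is exactly the linear form of \Cref{lemma:chaos_schur}. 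Under the constraints $\|x\|_2=1$ and $\|x\|_\infty=v$, \Cref{cor:extreme_point} then bounds $\|S(x)\|_d\leqslant\|S(x^{*})\|_d$ (the boundary case $v=1$ forces $x=e_1$ and $E_r\equiv 0$), which reduces the whole estimate to evaluating the single moment $\mathbb{E}[S(x^{*})^d]$, nonnegative because $d$ is even, with $\|E_r(x)\|_d\leqslant 4\,(\mathbb{E}[S(x^{*})^d])^{2/d}$.

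Next I would expand this moment against the explicit extremal vector. Setting $a=\sqrt{(1-v^2)/(n-1)}$ and $W=\sum_{i=2}^{n}Y_i$, we have $S(x^{*})=vY_1+aW$, so independence of $Y_1$ and $W$ together with the binomial theorem yields $\mathbb{E}[S(x^{*})^d]=\sum_{l=0}^{d}\binom{d}{l}v^l a^{d-l}\,\mathbb{E}[Y_1^l]\,\mathbb{E}[W^{d-l}]$. Symmetry of $Y_1$ annihilates all odd $l$, and since $Y_1\in\{-1,0,1\}$ one has $\mathbb{E}[Y_1^{2k}]=p$ for $k\geqslant 1$ and $\mathbb{E}[Y_1^0]=1$, which is precisely the factor $p^{\mathbb{I}(k>0)}$. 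Substituting $l=2k$ and $a^{d-2k}=(1-v^2)^{(d-2k)/2}(n-1)^{-(d-2k)/2}$ reproduces the stated sum, provided one identifies $(n-1)^{-(d-2k)/2}\mathbb{E}[W^{d-2k}]=\mathbb{E}[(B'-B'')^{d-2k}]$ for every even exponent $d-2k$.

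The crux, and the step I expect to be the main obstacle, is this last distributional identity for $W$. I would establish it by representing each $Y_i$ as a difference of two independent Bernoullis: if $U,V\sim\mathrm{Bern}(q)$ are independent, then $U-V$ is symmetric on $\{-1,0,1\}$ with $\Pr[U-V=\pm1]=q(1-q)$. Matching this to $\Pr[Y=\pm1]=p/2$ forces $q(1-q)=p/2$, i.e.\ the quadratic $q^2-q+p/2=0$, whose root $q=(1-\sqrt{1-2p})/2$ is exactly the success probability in the lemma (real and in $(0,1/2]$ once $p\leqslant 1/2$, the regime of interest). Hence $Y_i\stackrel{d}{=}U_i-V_i$, and summing over $i=2,\dots,n$ gives $W\stackrel{d}{=}\mathrm{Binom}(n-1,q)-\mathrm{Binom}(n-1,q)$, a difference of two independent binomials, which is precisely $\sqrt{n-1}\,(B'-B'')$. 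The even moments therefore coincide, completing the identification; the remaining work is the routine bookkeeping of collecting the even-index terms and taking the final $2/d$-th power to obtain $T_{n,p,d}(v)$.
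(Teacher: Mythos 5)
Your proposal is correct and takes essentially the same route as the paper's own proof: reduce $E_r(x)$ to the linear form $\sum_i x_i Y_i$ via \Cref{lemma:chaos}, pass to the extremal vector $x^{*}$ via \Cref{cor:extreme_point}, expand $\mathbb{E}(vY_1+aW)^d$ using independence and symmetry, and represent each $Y_i$ as a difference of independent Bernoullis so that $W$ becomes a difference of independent binomials. Your matching condition $q(1-q)=p/2$ is the same quadratic as the paper's $\sigma^2+(1-\sigma)^2=1-p$, yielding the identical parameter $(1-\sqrt{1-2p})/2$, so there is no gap.
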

To show this result, we combine \Cref{lemma:chaos} and \Cref{lemma:chaos_schur}.
When explicitly evaluating $\|\sum_i x^{*}_i Y_i\|_d$, we thereby arrive at the expression given by \Cref{eq:row_estimate}.

\medskip
Now we are in position to show the following theorem, which constitutes the main result of our work.

\begin{theorem}[Error Moments]\label{thm:main}
If $\|x\|_2=1$ and $\|x\|_{\infty}=v$, then for any positive even $d$, we have that
\begin{align*}
\| E(x) \|_d \leqslant  s^{-1}\cdot  Q_{n,p,d}(v),
\end{align*}
where $Q = Q_{n,p,d}(v)$ solves the equation
\begin{align}\label{eq:aggregated_func}
\sum_{k=0}^{\frac{d}{2}} \, \binom{d}{2k} \, (T_{n,p,2k}(v)/Q)^{2k}
= \mathrm{e}^{\frac{d}{2m}}
\end{align} 
and $T_{n,p,2k}$ is as in \Cref{lemma:row_estimate} (with $d$ replaced by $2k$).
\end{theorem}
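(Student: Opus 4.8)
The plan is to exploit the i.i.d.\ structure across rows. By construction the rows of $A$ are sampled independently, so writing $E(x)=s^{-1}\sum_{r=1}^{m}E_r(x)$ expresses $s\,E(x)$ as a sum of $m$ independent and identically distributed copies of a single row contribution $E_r(x)$. These summands are centered, since $\mathbb{E}[E_r(x)]=s\sum_{i\neq j}\mathbb{E}[A_{r,i}]\,\mathbb{E}[A_{r,j}]\,x_i x_j=0$ by independence across columns and the zero mean of the symmetric signs, and $\|E(x)\|_d=s^{-1}\|\sum_{r=1}^{m}E_r(x)\|_d$. The heart of the argument is then to apply \Cref{lemma:iid_estimation} to the $m$ summands $E_r(x)$, which reduces the task to controlling the single-variable quantity $\mathbb{E}\big(1+E_r(x)/t\big)^{d}$.

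First I would reduce $\mathbb{E}(1+E_r/t)^{d}$ to even moments only. Expanding by the binomial theorem gives $\mathbb{E}(1+E_r/t)^{d}=\sum_{j=0}^{d}\binom{d}{j}\mathbb{E}[E_r^{\,j}]\,t^{-j}$, and the odd-order terms must be removed so that the expression matches the purely even form of \eqref{eq:aggregated_func}. This is exactly where the \emph{symmetry} hypothesis of \Cref{lemma:iid_estimation} enters, and it is the step I expect to be the main obstacle: the row contribution $E_r=\sum_{i\neq j}\tilde Z_i\tilde Z_j$ (with the $\tilde Z_i$ symmetric and independent) is centered but \emph{not} itself symmetric, so its odd moments do not vanish in general. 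I would resolve this by passing to a symmetric surrogate through the decoupling/symmetrization device of \Cref{rem:chaos}, carrying out the aggregation over symmetric summands whose even moments coincide with those already controlled; the payoff is that only the even indices $2k$ survive, with $\mathbb{E}[E_r^{2k}]=\|E_r\|_{2k}^{2k}$ entering the sum. Care is required here to verify that this replacement does not reverse the direction of the inequality.

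Next I would insert the row-wise moment bound. \Cref{lemma:row_estimate} gives $\|E_r\|_{2k}\leqslant T_{n,p,2k}(v)$ for every even $2k\leqslant d$, hence $\mathbb{E}[E_r^{2k}]\leqslant T_{n,p,2k}(v)^{2k}$, and therefore
\[
\mathbb{E}\big(1+E_r/t\big)^{d}\;\leqslant\;\sum_{k=0}^{d/2}\binom{d}{2k}\big(T_{n,p,2k}(v)/t\big)^{2k}\;=:\;U(t).
\]
The function $U$ is continuous and strictly decreasing on $(0,\infty)$, with $U(t)\to\infty$ as $t\to 0^{+}$ (the $k=d/2$ term behaves like $t^{-d}$) and $U(t)\to 1$ as $t\to\infty$ (only the $k=0$ term survives). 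Since $\mathrm{e}^{d/(2m)}>1$, there is a unique $Q=Q_{n,p,d}(v)>0$ solving $U(Q)=\mathrm{e}^{d/(2m)}$, which is precisely \eqref{eq:aggregated_func}, and $\{t:U(t)\leqslant \mathrm{e}^{d/(2m)}\}=[Q,\infty)$.

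Finally I would chain the inequalities. Because $\mathbb{E}(1+E_r/t)^{d}\leqslant U(t)$, the admissible set $\{t:\mathbb{E}(1+E_r/t)^{d}\leqslant \mathrm{e}^{d/(2m)}\}$ contains $[Q,\infty)$, so its infimum is at most $Q$; \Cref{lemma:iid_estimation} then yields $\|\sum_{r=1}^{m}E_r\|_d\leqslant Q$, and dividing by $s$ gives $\|E(x)\|_d\leqslant s^{-1}Q_{n,p,d}(v)$, as claimed. I expect the only genuinely delicate point to be the symmetrization in the second step; the remaining steps amount to a direct substitution of the row bound followed by a monotonicity argument identifying $Q$ with the minimal admissible threshold.
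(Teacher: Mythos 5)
There is a genuine gap at the very first step: you assert that the rows of $A$ are sampled independently, so that the $E_r(x)$ are i.i.d.\ and \Cref{lemma:iid_estimation} applies directly. This is false for the construction in \Cref{sec:construction}: the $s$ non-zero positions in each \emph{column} are chosen \emph{without replacement} among the $m$ rows, so the entries $A_{r,i}$ and $A_{r',i}$ in the same column are correlated across rows (a non-zero at row $r$ makes a non-zero at row $r'$ less likely), and hence the row contributions $E_r(x)$ are dependent. The paper flags this as one of the two subtle points of the proof and resolves it via \emph{negative dependence}~\cite{dubhashi1996balls}: by the moment-comparison argument of~\cite{shao2000comparison} one has $\mathbb{E}\bigl(\sum_r E_r(x)\bigr)^d \leqslant \mathbb{E}\bigl(\sum_r E'_r(x)\bigr)^d$ for independent copies $E'_r(x)$, and only after this replacement can \Cref{lemma:iid_estimation} be invoked. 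Your argument, as written, applies the i.i.d.\ lemma to variables that are not independent, with no comparison step to license it; this is a missing idea, not a cosmetic omission, since for general (e.g., positively) dependent summands the conclusion of \Cref{lemma:iid_estimation} can fail.

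Everything after that point tracks the paper's proof essentially verbatim, and you correctly identify the \emph{other} subtle point: $E_r(x)$ is centered but not symmetric, so its odd moments need not vanish, and one must pass to a symmetric surrogate via \Cref{rem:chaos} before expanding $\mathbb{E}(1+Z/t)^d$ into even terms only. Your insertion of $\|E_r\|_{2k}\leqslant T_{n,p,2k}(v)$ from \Cref{lemma:row_estimate}, and the monotonicity argument showing that the unique root $Q$ of \eqref{eq:aggregated_func} upper-bounds the infimum in \Cref{lemma:iid_estimation}, are both correct and in fact spell out the identification of $Q$ more explicitly than the paper does. To repair the proof, replace your opening independence claim with the negative-dependence comparison: first pass from $(E_r(x))_r$ to independent copies $(E'_r(x))_r$ at the level of $d$-th moments, then proceed exactly as you do.
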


The detailed proof (see~\ref{proof:thm:main}) starts with $E(x) =\frac{1}{s} \sum_{r=1}^{m}E_r(x)$, applies \Cref{lemma:iid_estimation} with $Z_r = E_r(x)$, and finally uses \Cref{lemma:row_estimate} (similarly to \cite{jagadeesan2019understanding}). The subtle points of the proof are summarized below.
\squishlist
\item\textit{Correlation of $E_r(x)$ for different $r$:} fortunately (due to sampling without replacement), this is a \emph{negative dependency}~\cite{dubhashi1996balls}. Thus, the same moment bounds as for independent random variables can be applied also here~\cite{shao2000comparison}.
\item\textit{Non-symmetric distribution of $E_r(x)$:} we compare the moments of $E_r(x)$ with the moments of a random variable which is symmetric; this allows for applying moment bounds for the sums of symmetric random variables. We remark that this argument also fills a gap in~\cite{jagadeesan2019understanding} (the final part of the proof there requires $E_r(x)$ to be symmetric, which is not the case here).
\squishend

\begin{corollary}[Error Confidence]\label{eq:corollary}
For the error
\begin{align*}
\epsilon = \mathrm{e} \, s^{-1}\cdot  Q_{n,p,\lceil\log(1/\delta)\rceil}(v),
\end{align*}
we have $\Pr[|E(x)|>\epsilon] \leqslant 1-\delta $ and \eqref{eq:jl_statement} holds.
\end{corollary}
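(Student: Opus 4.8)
The plan is to turn the moment estimate of \Cref{thm:main} into a tail bound by a single application of Markov's inequality at a high even moment, the degree of which is tuned to the desired confidence $\delta$. Concretely, I would fix an even integer $d$ with $d\geqslant \log(1/\delta)$ -- the choice indicated in the statement, $d=\lceil\log(1/\delta)\rceil$ (taken as the least even integer above $\log(1/\delta)$ so that \Cref{thm:main} is applicable). Because $d$ is even, $|E(x)|^d = E(x)^d$ and $\mathbb{E}[|E(x)|^d] = \|E(x)\|_d^d$, so no absolute-value technicalities arise.

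Next I would apply Markov's inequality to the nonnegative variable $|E(x)|^d$ and insert the bound from \Cref{thm:main}:
\[
\Pr[|E(x)| > \epsilon] = \Pr[|E(x)|^d > \epsilon^d] \leqslant \frac{\|E(x)\|_d^d}{\epsilon^d} \leqslant \left(\frac{s^{-1}\,Q_{n,p,d}(v)}{\epsilon}\right)^{d}.
\]
Substituting the prescribed error $\epsilon = \mathrm{e}\cdot s^{-1}\,Q_{n,p,d}(v)$ makes the parenthesized ratio exactly $1/\mathrm{e}$, whence
\[
\Pr[|E(x)| > \epsilon] \leqslant \mathrm{e}^{-d} \leqslant \mathrm{e}^{-\log(1/\delta)} = \delta,
\]
using $d\geqslant\log(1/\delta)$ in the middle step. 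The factor $\mathrm{e}$ in the definition of $\epsilon$ is chosen precisely to produce this clean $\mathrm{e}^{-d}\leqslant\delta$ bound (it also comfortably absorbs the rounding of $d$ up to an even integer). This establishes the confidence guarantee; I note in passing that the ``$1-\delta$'' on the right-hand side of the corollary should read $\delta$.

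It remains to deduce \eqref{eq:jl_statement}. Since we work under the normalization $\|x\|_2=1$, we have $E(x)=\|Ax\|_2^2-1$, so on the event $|E(x)|\leqslant\epsilon$ one gets $1-\epsilon\leqslant\|Ax\|_2^2\leqslant 1+\epsilon$; taking square roots and using the elementary bounds $1-\epsilon\leqslant\sqrt{1-\epsilon}$ and $\sqrt{1+\epsilon}\leqslant 1+\epsilon$ yields $1-\epsilon\leqslant\|Ax\|_2\leqslant 1+\epsilon$ with probability at least $1-\delta$, and the general case follows by homogeneity after rescaling. I expect the only genuine subtlety to be the parity constraint of \Cref{thm:main}: one cannot take $d=\log(1/\delta)$ literally, so the argument must round to an even order while preserving $d\geqslant\log(1/\delta)$; everything else is the routine Markov computation above.
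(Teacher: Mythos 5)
Your proof is correct and takes essentially the same route as the paper, which likewise obtains the corollary as a direct application of Markov's inequality, $\Pr[|E(x)|>\epsilon]\leqslant (s^{-1}Q_{n,p,d}(v)/\epsilon)^{d}=\mathrm{e}^{-d}\leqslant\delta$, using the moment bound of \Cref{thm:main} at $d=\lceil\log(1/\delta)\rceil$. Your two side remarks are also sound points the paper glosses over: the stated probability bound ``$1-\delta$'' should indeed read $\delta$, and $d$ must be rounded to an even integer for \Cref{thm:main} to apply.
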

The corollary is a direct application of Markov's inequality $\Pr[|E(x)|>\epsilon]\leqslant (s^{-1} Q_{n,p,d}(v) /\epsilon)^d$.

\subsection{Discussion}

\begin{remark}[Computational Efficiency]
The time of evaluating the distortion $\epsilon$ in \Cref{eq:corollary} is in
\begin{align}
\textsc{TIME} = O( \log^{4}(1/\delta)\log(m\log(1/\delta)) ).
\end{align}
\end{remark}

This is because $T_{n,p,d}(v)$ can be evaluated with $O(d^3)$ operations, utilizing the well-known combinatorial formulas for binomial moments~\cite{griffiths2013raw,knoblauch2008closed}. In turn, $Q_{n,p,d}(v)$ inverts a monotone function which can be computed by bisection in $O(\log (m\,d))$ steps (we have $Q \leqslant O(m\,d)$ which follows from $T_{n,p,d} \leqslant O(d)$). 

\begin{remark}[Comparison to State-of-the-Art]\label{rem:state_of_art}
The approach of~\cite{jagadeesan2019understanding} follows the same roadmap, but the critical steps in that work are estimated in a weaker way than in our approach, namely:
\begin{enumerate}
\item a weaker analogue of our \Cref{lemma:chaos} is used,
\item in place of our sharp \Cref{cor:extreme_point}, an overestimation of $\|\sum_i x_i Y_i\|_{d}$ is obtained,
\item bounds on $E_r(x)$ are assembled to bound $E(x)$ via a weaker variant of~\cite{latala1997estimation}, which is further weaker than our \Cref{lemma:iid_estimation}.
\end{enumerate}
\end{remark}

Thus, our bounds are guaranteed to be tighter for all parameter regimes.

\begin{remark}[Dependency on $n$]
Although our dependency on $n$ is only asymptotically bounded, we find that---interestingly---it indeed helps improving the bounds on real-data datasets and use-cases, as shown in the next section.
\end{remark}

A detailed empirical evaluation of our findings is provided in the following section.

\section{Empirical Evaluation}
\label{sec:experiments}

In this section, the present the detailed results of our experimental evaluation. We implemented the bound provided by \Cref{thm:main} in Python 3.6 and tested it in the Google Colab environment using an Intel(R) Xeon(R) CPU @ 2.20GHz and the default RAM configuration of 13GB.

\subsection{Implementation}
Computing $T_{n,p,d}(v)$ is non-trivial because of the moments of $B'-B''$ in \eqref{eq:row_estimate}, namely the naive expansion leads to an alternating sum. We stabilize it numerically by symbolic simplifications and by subtracting the leading term using \texttt{SymPy}. In turn, the implicit function $Q_{n,p,d}$ is evaluated by solving \eqref{eq:aggregated_func} via the bisection algorithm from \texttt{SciPy}. This yields a reasonably fast algorithm which on average takes about 1 millisecond per call in our Colab environment. 

This gives a reasonably fast algorithm (about 1 millisecond per call on average, implemented with \texttt{SymPy}/\texttt{SciPy} and run in Google Colab) as illustrated in \Cref{fig:running_time}. The plot shows the distribution of running times aggregated over different choices of all parameters $n, m, s, d, v$:
$n$ is sampled uniformly between $10^3$ and $10^6$,  $m$ is sampled uniformly between  $0.01 n$ and $n$,
$d$ is sampled uniformly between $2$ and $30$, finally $p$ and $v$ are sampled uniformly between $0$ and $1$.
\begin{figure}[ht!]
\centering
\includegraphics[width=0.45\linewidth]{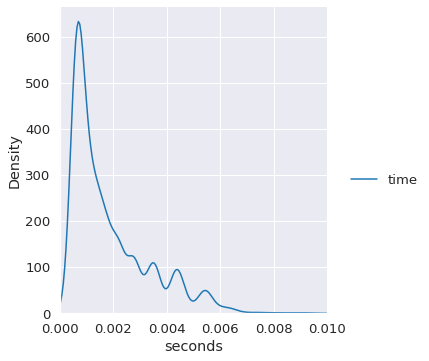}
\caption{Distribution of running times for the bounds of~\Cref{thm:main}}
\label{fig:running_time}
\end{figure}

\subsection{Baselines: Best Bounds in Prior Works}
To give a clear and fair comparison, we analyze the best constants in the previous asymptotic analysis~\cite{jagadeesan2019understanding}. The in-depth analysis gives the value of ``optimistic'' constants necessary to avoid breaking down the proof (while the actual constants are likely worse).

\begin{remark}[Optimistic Constants in Prior Works]\label{remark:best_constants}
The bound provided in~\cite{jagadeesan2019understanding} uses the better of the following two lemmas (Lemmas D.1 and D.2, respectively):
\begin{enumerate}
\item $\|E_r(x)\|_d \leqslant 2 \, C_1 \cdot  \left(\sup_{1\leqslant t\leqslant \frac{d}{2}} \left[\frac{d v}{t}\left(\frac{p}{dv^2}\right)^{\frac{1}{2t}}\right]\right)^2$
\item $\|E_r(x)\|_d \leqslant 2 \, C_2 \cdot \frac{d}{\log(1/p)}$,
\end{enumerate}
where $d$ is assumed positive and even.
\end{remark}

Here, the extra factor of $2$ appears as the effect of symmetrization (the random variable $E_r(x)$ must be dominated by a symmetric random variable to conclude the bound on $E(x)$). The best constants satisfy $C_1\geqslant 4\mathrm{e}$ and $C_2\geqslant 8$, as it is implied by the analysis of their proof technique.

\subsection{Synthetic Benchmark}

\noindent\textbf{Setup.~}
The key ingredient of our improvements is the sharper bound on the row-wise error contributions $E_{r}(x)$ from 
\Cref{lemma:row_estimate}. In this experiment, we compare this bound (referred to as $T_{new}$) with its analogue from~\cite{jagadeesan2019understanding} with the ``optimistic'' constants as discussed in \Cref{remark:best_constants}
(referred to as $T_{old}$). \Cref{fig:synthetic1,fig:synthetic2} illustrate the respective ratios of $T_{new}$ and  $T_{old}$ with respect to the error contributions $E_r(x)$ for $n=10^4$ and various ranges of $d$, $v$ and $p=\frac{s}{m}$. Points with non-even $d$ are interpolated. 

\medskip
\noindent\textbf{Results.~}
Our bounds are better by up to an order of magnitude across a wide range of parameters. Therefore, we should expect similar improvements for our bounds on the overall error $E(x)$ (recall that $E_r(x)$ are aggregated into $E(x)$ using 
\Cref{lemma:iid_estimation}).

\begin{centering}
\begin{minipage}{\textwidth}
\centering
\begin{minipage}[b]{0.49\textwidth}\centering
\includegraphics[width=0.95\linewidth]{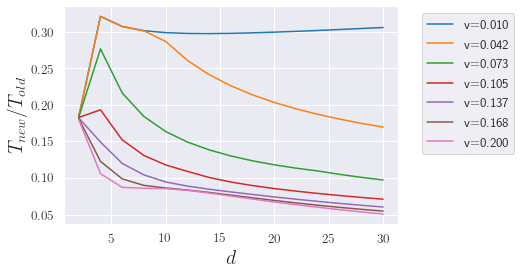}
\captionof{figure}{$T_{new}/T_{old}$ for $n=10^4$, $p=10^{-3}$}
\label{fig:synthetic1}
\end{minipage}
\hfill
\begin{minipage}[b]{0.49\textwidth}\centering
\includegraphics[width=0.96\linewidth]{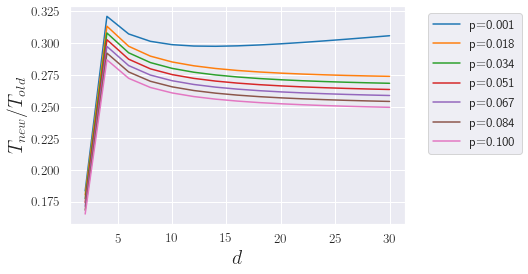}
\captionof{figure}{$T_{new}/T_{old}$ for $n=10^4$, $v=10^{-2}$}
\label{fig:synthetic2}
\end{minipage} 
\end{minipage}
\end{centering}

%

\subsection{Real-World Datasets}

\noindent\textbf{Setup.~}
We next consider various real-world datasets of different content types, sizes and numbers of features---as summarized in \Cref{tab:datasets}. Some of these datasets have small numbers of features, which is to demonstrate that our bounds give good results also when $n$ is small.

\begin{table}[htb!]
\vspace*{-1\baselineskip}
\centering
\resizebox{1.0\textwidth}{!}{
\begin{tabular}{|p{0.25\linewidth} | p{0.19\linewidth} | p{0.65\linewidth}|  }
\hline
\rowcolor{lightgray}
\textbf{Dataset} &\textbf{Content } & \textbf{Comments} \\
\hline
\texttt{NIPS}
 & text & 13,000 words \\
\hline
\texttt{Word2Vec/Wiki} & text &5M lines / 48M words of English Wikipedia articles processed by a default NLP pipeline of \texttt{Spacy} \cite{honnibal2018spacy}\\
\hline
\texttt{News20} & text & 20,000 documents / 34,000 words of English news \cite{Lang95}\\
\hline
\texttt{MNIST} & images & 60,000 images with 28x28 pixels \cite{lecun-mnisthandwrittendigit-2010} \\
\hline
\texttt{CIFAR100}& images & 60,000 images with 32x32 pixels \cite{krizhevsky2009learning}  \\
\hline
\texttt{SVHN }& images & 600,000 images with 32x32 pixels \cite{netzer2011reading} \\
\hline
\texttt{Caltech101} & images &9,000 images with 300x200 pixels \cite{fei2004learning}\\
\hline
\texttt{Cars} & images &16,000 images with 500x500 pixels \cite{krause20133d}\\
\hline
\texttt{Goodwin040} & fluid dynamics & 18,000 columns / 18,000 rows \cite{davis2011university}\\
\hline
\texttt{Mycielis\-kian17} & undir. graph & 98,000 columns / 98,000 rows \cite{davis2011university} \\
\hline
\end{tabular}
}
\caption{Summary of real-world datasets used in our experiments}
\label{tab:datasets}
\end{table}

\noindent\textbf{Dispersion.}
Since sparsity $s$ depends on the data-dependent dispersion $v$, results obtained in prior work may be of limited applicability in practice when $v$ is not small. To understand the behavior of $v$, we evaluate its distribution on our datasets. We conclude that, indeed, the value of $v$ may be quite large, even when $n$ is big; in such cases, using a very small sparsity $s$ is not theoretically justified.

Density plots on \Cref{fig:dispersion_text,fig:dispersion_small_images,fig:dispersion_large_images,fig:dispersion_misc} illustrate the distribution of the dispersion $v = \|x\|_{\infty}/\|x\|_2$ for vectors $x = x_1-x_2$ over all pairs $x_1,x_2 \in \mathcal{X}$ from a subsample $\mathcal{X}$ of the dataset. Evaluating the dispersion on pairwise differences corresponds to the intended usage of random projections:  preserving pairwise distances within a dataset. We used $|\mathcal{X}|=250$ so that $v$ is estimated based on $\approx 5 \cdot 10^4$ samples. 

We generally find that, for each dataset, $v$ is sharply concentrated around a ``typical'' value, whose magnitude is data-dependent. For example, data with smaller $n$ may be less dispersed than data with large $n$. Text, represented either by neural embeddings or bag-of-words, is usually quite dispersed.

\begin{centering}
\begin{minipage}{\textwidth}
\centering
\begin{minipage}[b]{0.49 \textwidth}\centering
\includegraphics[width=0.95\linewidth]{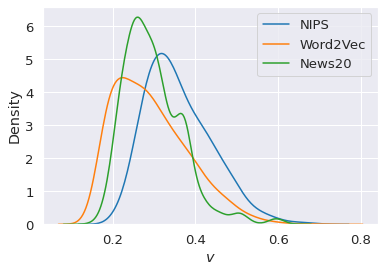}
\captionof{figure}{Dispersion $v$ on text data} 
\label{fig:dispersion_text}
\end{minipage}
\begin{minipage}[b]{0.49\textwidth}\centering
\includegraphics[width=0.98\linewidth]{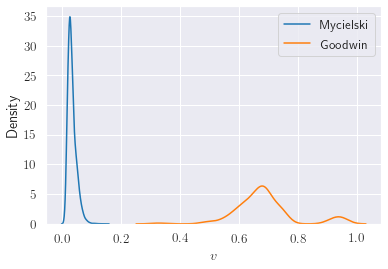}
\captionof{figure}{Dispersion $v$ on sparse-matrix} 
\label{fig:dispersion_misc}
\end{minipage}    
\end{minipage}
\begin{minipage}{\textwidth}
\centering
\begin{minipage}[b]{0.49\textwidth}\centering
\includegraphics[width=0.94\linewidth]{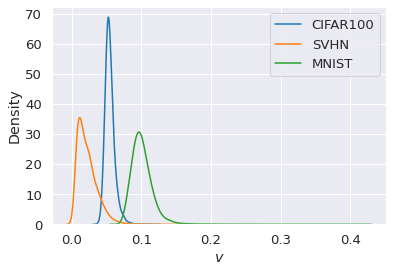}
\captionof{figure}{Dispersion $v$ on small images} 
\label{fig:dispersion_small_images}
\end{minipage}
\begin{minipage}[b]{0.49\textwidth}\centering
\includegraphics[width=0.98\linewidth]{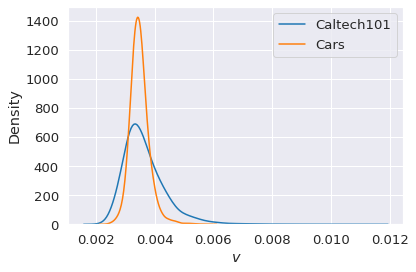}
\captionof{figure}{Dispersion $v$ on large images} 
\label{fig:dispersion_large_images}
\end{minipage}    
\end{minipage}
\end{centering}





\medskip
\noindent\textbf{Distortion.~}
The next experiment analyzes the confidence $1-\delta$ as a function of the distortion $\epsilon$ of our and previous bounds. We assume $\frac{m}{n} = 0.1$, $\frac{s}{m}=0.01$. The dispersion $v$ is chosen at the typical most likely value for each dataset 
(see our previous analysis). The confidence follows from \Cref{thm:main} by Markov's inequality. The results are illustrated on \Cref{fig:conf_eps_summary}. 

Our bounds produce very good results for all datasets with large $n$, thus outperforming the previous approach by several orders of magnitude in terms of confidence. Remarkably, we also obtain non-trivial bounds when $n$ is small (such as \texttt{SVHN}), as opposed to the bounds from previous works. For some datasets, prior bounds produce trivial results (i.e., $1-\delta=0$) for fairly large ranges of $\epsilon$.


\medskip
\noindent\textbf{Sparsity.~}
In this experiment, we evaluate the critical value of distortion $\epsilon$, which allows for using non-trivial sparsity $s<m$ such that the confidence $1-\delta$ is at least $\frac{3}{4}$. For each dataset, we choose as before its typical value $v$ and fix the dimension reduction factor $\frac{m}{n} = 0.1$. The results are summarized in \Cref{fig:eps_s}. Note that, for smaller values of $\epsilon$, no $s<m$ can work, which produces flat segments $s=m$ (particularly visible for previous bounds). Our bounds offer a non-trivial sparsity $s$ for much smaller distortions, and quickly achieve $s=1$.

\begin{centering}
\begin{minipage}{\textwidth}
\centering
\begin{minipage}[b]{0.49\textwidth}\centering
\includegraphics[width=0.95\linewidth]{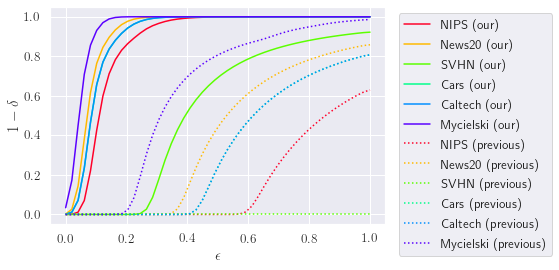}
\captionof{figure}{Confidence $1-\delta$ vs. distortion}
\label{fig:conf_eps_summary}
\end{minipage}
\hfill
\begin{minipage}[b]{0.49\textwidth}\centering
\includegraphics[width=0.99\linewidth]{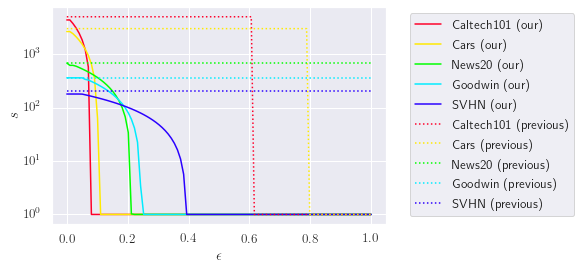}
\captionof{figure}{Sparsity $s$ vs. distortion}
\label{fig:eps_s}
\end{minipage}    
\end{minipage}
\end{centering}


\medskip
\noindent\textbf{Dimensionality.~}
In the last experiment, we evaluate the minimal non-trivial dimension $m$. We again consider a fixed sparsity of $\frac{s}{m} = 0.1$ and choose the typical dispersion $v$ for each dataset. Then, for various values of $\epsilon$, we compute the smallest $m$ which still yields a confidence of $1-\delta$ of $\frac{3}{4}$. The results, illustrated in \Cref{fig:dim_compar}, show that our bounds are better by 10 times or more. As the critical value of $m$ does not, at least asymptotically, depend on $n$ or $v$, we expect a similar behavior across datasets.

\medskip
\noindent\textbf{Multiple data points.~}
So far the experiments covered the performance on one input vector at a time only; the case of multiple data points reduces to the former one by scaling the confidence accordingly (union bound), where we again compute the smallest $m$ which still yields a confidence $1-\delta$ of $\frac{3}{4}$ over all points. The result shows the expected logarithmic dependency of the dimensionality $m$ with respect to the data size, as shown in \Cref{fig:dim_datasize}.

\begin{centering}
\begin{minipage}{\textwidth}
\centering
\begin{minipage}[b]{0.49\textwidth}\centering
\includegraphics[width=0.93\linewidth]{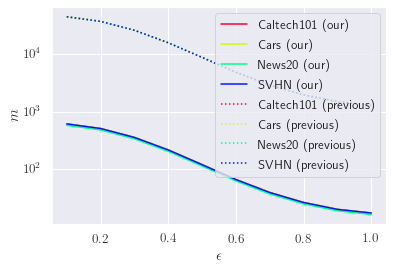}
\captionof{figure}{Dimensionality  vs. distortion}
\label{fig:dim_compar}
\end{minipage}
\hfill
\begin{minipage}[b]{0.49\textwidth}\centering
\includegraphics[width=0.99\linewidth]{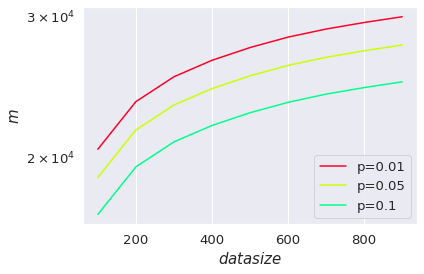}
\captionof{figure}{Dimensionality vs. data size}
\label{fig:dim_datasize}
\end{minipage}    
\end{minipage}
\end{centering}


\section{Conclusions}
We presented a framework for sparse random projections which provides provable guarantees with empirically significant numerical improvements over previous approaches. Our gain in comparison to previous approaches has been demonstrated on a large variety of (both synthetic and real-world) datasets. Moreover, we believe that the novel inequalities behind our improvements are of broader interest for a variety of statistical-inference applications.

\subsubsection*{Acknowledgements}
We thank the NVIDIA AI Technology Center (NVAITC) for the fruitful discussion.

\newpage

\appendix

\section{Proofs}

\subsection{Proof of \Cref{thm:main}}
\label{proof:thm:main}
By \eqref{eq:contrib} and \eqref{eq:row_contrib}, we have $E(x) =s^{-1} \sum_{r=1}^{m}E_r(x)$.
Let $E'_r(x) $ be independent copies of $E_r(x)$ for $r=1\ldots m$. 

The random variables $(E_r(x))_r$ are negatively dependent (see~\cite{dubhashi1996balls}), and thus the moments of their sum are not bigger than if they were independent. More precisely, for any positive integer $d$ we have:
\begin{align*}
\mathbb{E}(\sum_r E_r(x))^d \leqslant \mathbb{E}(\sum_r E'_r(x))^d
\end{align*}
as observed in~\cite{jagadeesan2019understanding} (see also a general argument in~\cite{shao2000comparison}). 
Thus, it holds that:
\begin{align*}
\|\sum_r E_r(x)\|_d \leqslant \|\sum_r E'_r(x)\|_d.
\end{align*}
The random variables $(E'_r(x))_r$ are iid with moments bounded by the moments of some symmetric random variables $E''_r(x)$ as discussed in \Cref{rem:chaos}. The moments of $E''_r(x)$ are in turn estimated in~\Cref{lemma:row_estimate}. The result follows now by applying \Cref{lemma:iid_estimation}.

\subsection{Proof of \Cref{lemma:chaos}}
\label{proof:lemma:chaos}
By the decoupling inequality, for any integer $d>0$, we obtain:
\begin{align}
   \|\sum_{i\not=j} Z_i Z_j\|_d \leqslant 4\|\sum_{i\not=j} Z_i Z'_j\|_d
\end{align}
where $Z'_i$ are independent copies of $Z_i$.
Next, we claim that for even $d$ the following holds:
\begin{align}
 \mathbb{E}(\sum_{i\not=j} Z_i Z'_j)^d \leqslant  \mathbb{E}(\sum_{i,j} Z_i Z'_j)^d
\end{align}
This follows by the multinomial expansion applied to both sides and evaluating the expectation term-by-term; due to the symmetry of random variables $Y_i$ and $Y'_i$, the expectation of every term is either zero or non-negative. Since $Y'_i$ and $Y_i$ are identically distributed, the sum on the right-hand side contains all the terms that appear on the left-hand side.

\subsection{Proof of \Cref{lemma:chaos_schur}}
\label{proof:lemma:chaos_schur}
Here, it suffices to prove that
$$
u\to  \mathbb{E}f\left(\sum_i Y_i u_i^{1/2}\right)
$$
is Schur-concave in $u$, where $f(t)=t^d$. Indeed, we have $\mathbb{E}(\sum_i Y_i x_i)^d = \mathbb{E}(\sum_i Y_i |x_i|)^d$ (follows by raising to the power of $d$ and applying the multinomial expansion, then only even powers contribute to the expectation), and
the claim follows by denoting $|x_i| = u_i$.

Since $g$ is symmetric, it suffices to check the Schur-Ostrowski criterion~\cite{ostrowski1952quelques,schur1923uber} for $u_1$ and $u_2$.
Let $W = \sum_{i>2} \eta_i\sigma_i u_i^{1/2}$, then
\begin{align*}
\frac{\partial g}{\partial u_1}-\frac{\partial g}{\partial u_2} = \frac{u_2^{1/2} X_1 - u_1^{1/2}X_2}{2(u_1u_2)^{1/2}}\cdot  f'\left(\sum_{i=1}^{2} Y_i  u_i^{1/2}+W\right)
\end{align*}
Thus it remains to prove that the expectation of
\begin{align}\label{eq:schur_proof_1}
Q\triangleq (u_2^{1/2} X_1 - u_1^{1/2}X_2)\cdot  f'\left(\sum_{i=1}^{2} Y_i  u_i^{1/2}+W\right)
\end{align}
is negative when $u_1 < u_2$.
Recall that $Y_i$ are symmetric and take three values $\{-1,0,1\}$.
We condition on two cases: a) $X_1,X_2\not=0$ and b) one of $X_1,X_2$ is zero.
In case a) the result reduces to the case of Rademacher variables, solved already by Eaton~\cite{eaton1970note}.
We are left with case b). If $X_1=X_2$ the expression is zero. We further assume $X_1\not=X_2$.
Consider the two disjoint events: $\mathcal{E}_1$ is that $X_2=0$ and $X_1=\pm 1$ and $\mathcal{E}_2$ is that $X_1=0$ and $X_2=\pm 1$. 
Then we have that
\begin{align*}
\mathbb{E}\left[ Q | \mathcal{E}_1,W \right] &= u_2^{1/2}\left( f'\left(  u_1^{1/2}+W\right)-f'\left(  -u_1^{1/2}+W\right)  \right)\\
\mathbb{E}\left[ Q | \mathcal{E}_2,W \right] &= -u_1^{1/2}\left( f'\left(  u_2^{1/2}+W\right)-f'\left(  -u_2^{1/2}+W\right)  \right)
\end{align*}
For $t>0$ we consider the the auxiliary function
\begin{align*}
g_w(t) = t^{-1}\left( f'\left(  t+w\right)-f'\left(  -t+w\right)  \right)
\end{align*}
We have $\mathbb{E}\left[ Q | \mathcal{E}_1,W \right] = (u_1 u_2)^{1/2} g_W(u_1^{1/2})$
and $\mathbb{E}\left[ Q | \mathcal{E}_2,W \right] = -(u_1 u_2)^{1/2} g_W(u_2^{1/2})$, and therefore
$\mathbb{E}[Q|\mathcal{E}_1\cup\mathcal{E}_2] = (u_1 u_2)\mathbb{E}_W[g_W(u_1^{1/2})-g_W(u_2^{1/2})]$.
If we prove that $g_W(t)$ increases in $t$, the proof is complete.

Since in our case $f(t) = t^{d}$, we find that $g_w(t) = d \cdot \frac{(w+t)^{d-1}-(w-u)^{d-1}}{t}$.
Since $d$ is even $g_w(t) = d t^{-1}( (w+t)^{d-1} + (t-w)^{d-1}) = d\cdot \sum_{0\leqslant k<\frac{d-1}{2}} \binom{d}{2k} t^{d-2-2k} w^{2k}$, indeed is increasing in $t$ regardless of $w$.

\subsection{Intuitions about \Cref{cor:extreme_point}}

The result follows because $x^{*}$ is majorized by every other vector which satisfies the constraints. However, one may wonder why the flat vector $x^{flat}$, with all non-zero entries equal to $v$, is not the worst case?
Observe that already for the case of $d=2$ this gives the norm of $\sqrt{v^2 p}$ while our construction gives the bigger value $\sqrt{p v^2 + p\frac{1-v^2}{n-1} }$.

\subsection{Proof of \Cref{lemma:row_estimate}}
\label{proof:lemma:row_estimate}
Due to \Cref{lemma:chaos} applied to $Z_i \sim A_{r,i}\cdot x_i$ and the definition of $A$, it suffices to show that $\|\sum_i x_i Y_i \|_d \leqslant T_{n,p,d}(v)$.
Consider $x^{*}$ as in \Cref{cor:extreme_point}.
Define $W = \frac{1-v^2}{\sqrt{n-1}}\sum_{i=2}^{n} Y_i$.
Using the independence and symmetry of $Y_1$ and $W$, we obtain:
\begin{align*}
\|\sum_i x^{*}_i Y_i \|_d^d &\leqslant \mathbb{E}\left(Y_1 v + W \right)^d \\
& = \sum_{k=0}^{d/2}\binom{d}{2k} v^{2k}\mathbb{E}Y_1^{2k} \mathbb{E}W^{d-2k} 
\end{align*}
Let $B_1,B_2$ be Bernoulli with parameter $\sigma$ such that
$\sigma^2+(1-\sigma)^2 = 1-p$, then $Y_i \sim B_1-B_2$, and thus
$W\sim B'-B''$ where $B,B''\sim^{i.i.d.} \mathrm{Binom}(n-1,\sigma)$. Also, we have $\mathbb{E}Y_1^{2k} = p^{\mathbb{I}(k>0)}$.
We have $T_{n,p,d}(v)\leqslant \|\sum_i x^{*}_i Y_i \|_d$ which (combined with the bound above) completes the proof.

\subsection{Analysis of~\cite{jagadeesan2019understanding}}

\paragraph{Proof of \Cref{remark:best_constants}}
\label{proof:remark:best_constants}
Inspecting the proof, we find that the best constants are:
\begin{enumerate}
\item $C_1=2\mathrm{e}C$ where $C$ is a constant which satisfies
$\binom{2d}{2d_1\ldots 2d_n} \leqslant C^{d} \frac{(2d)^{2d}}{\prod_i (2d_i)^{2d_i}}$ for all $(d_i)_i$ with sum $d$ and such that 
$d_i\leqslant d/2$. Specializing to $d_1 = d/2, d_2 = d/2$ and $d_i=0$ for $i>2$ we 
see that $C$ must satisfy $\binom{2d}{d} \leqslant C^d \frac{(2d)^{2d}}{d^{2d}} = C^d 2^{2d}$. But $\binom{2d}{d}=\Theta(2^{2d}/\sqrt{d})$ (see~\cite{eger2014stirling}), so we must have $C\geqslant 1$.

\item The proof starts with the bound $\|E_r(x)\|_{d}^{1/2} \leqslant K^{1/2} + C_1 \cdot K^{-1/2}\sup_{1\leqslant t\leqslant d} d/t\cdot (K p /d )^{1/2t} $ for any integer $K$ and $C_1\geqslant 2\mathrm{e}$ as in the discussion above. The goal is to choose $K$ so that the right-hand side becomes $\sqrt{C _2 d/\log(1/p)}$. Following the derivative test, the value of $t$ is optimized
by substituting $t=\frac{1}{2}\log(d/Kp)$ which gives the value of $R=K^{1/2}+\frac{2d}{K^{1/2}\log(d/Kp)}$.
Now, by the inequality of arithmetic and geometric means, we obtain $R \geqslant 2\sqrt{2d/\log(d/Kp)} \geqslant 2\sqrt{2d/\log(1/p)}$. Thus, the proof implies only $C_2 \geqslant 8$.

\end{enumerate}



\subsection{Proof of \Cref{lemma:iid_estimation}}
\label{proof:lemma:iid_estimation}
Consider random variables $Z_1,\ldots,Z_n$. We have
\begin{align*}
\mathbb{E}(\sum_i Z_i)^d = \sum_{d=(d_i)_i}\binom{d}{d_1\ldots d_n}\prod_{i}\mathbb{E}Z_i^{d_i}
\end{align*}
We need the following
\begin{proposition}\label{prop:better_latala}
We have $\binom{d-y}{x} \leqslant c\cdot \binom{d}{x}$ for $0\leqslant x,y$, $x+y\leqslant d$ where
$c= \mathrm{e}^{-\frac{xy}{d}}$.
\end{proposition}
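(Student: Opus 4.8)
The plan is to prove the equivalent multiplicative form $\binom{d-y}{x}\big/\binom{d}{x}\leqslant \mathrm{e}^{-xy/d}$ by first rewriting the ratio as a product of elementary factors and then applying the elementary bound $1-t\leqslant \mathrm{e}^{-t}$. Treating $x,y$ as nonnegative integers with $x+y\leqslant d$, I would cancel the common factorials to obtain
\[
\frac{\binom{d-y}{x}}{\binom{d}{x}} = \frac{(d-y)!\,(d-x)!}{d!\,(d-y-x)!} = \prod_{i=0}^{x-1}\frac{d-y-i}{d-i}.
\]
The constraint $x+y\leqslant d$ is exactly what is needed to guarantee that each factor is strictly positive, since $d-y-i\geqslant d-y-(x-1)\geqslant 1$ for all $i\leqslant x-1$, so the factored form is valid on the full stated range with no sign or cancellation issues.

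Next I would bound each factor by an exponential. Writing $\frac{d-y-i}{d-i}=1-\frac{y}{d-i}$ and applying $1-t\leqslant \mathrm{e}^{-t}$ termwise, the product is dominated by
\[
\prod_{i=0}^{x-1}\left(1-\frac{y}{d-i}\right) \leqslant \exp\!\left(-y\sum_{i=0}^{x-1}\frac{1}{d-i}\right).
\]
To recover the clean exponent $-xy/d$, I would then lower-bound the inner sum: it consists of exactly $x$ terms, each satisfying $\frac{1}{d-i}\geqslant\frac{1}{d}$, whence $\sum_{i=0}^{x-1}\frac{1}{d-i}\geqslant \frac{x}{d}$. Substituting this into the previous display yields $\binom{d-y}{x}\big/\binom{d}{x}\leqslant \mathrm{e}^{-xy/d}$, which is the claim.

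There is no genuine obstacle in this argument; it is essentially a one-line telescoping combined with two standard inequalities. The only points requiring mild care are verifying positivity of every factor so the product representation holds throughout $x+y\leqslant d$, and noting that the deliberately crude estimate $\frac{1}{d-i}\geqslant\frac{1}{d}$ — rather than the tighter harmonic sum $\sum_{j=d-x+1}^{d}1/j$ — is precisely what produces the stated constant $c=\mathrm{e}^{-xy/d}$. A sharper constant could be obtained by retaining the full harmonic estimate, but that refinement is unnecessary for the purposes of the proof of \Cref{lemma:iid_estimation}.
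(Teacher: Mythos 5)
Your proof is correct and follows essentially the same route as the paper: write $\binom{d-y}{x}\big/\binom{d}{x}$ as a product of $x$ elementary factors and dominate it by $\mathrm{e}^{-xy/d}$ via $1-t\leqslant \mathrm{e}^{-t}$, the only cosmetic difference being that you exponentiate factor-by-factor and then use $\frac{1}{d-i}\geqslant\frac{1}{d}$, whereas the paper first bounds each factor by $1-\frac{y}{d}$ and then exponentiates $\bigl(1-\frac{y}{d}\bigr)^{x}\leqslant \mathrm{e}^{-xy/d}$. Incidentally, your factorization $\prod_{i=0}^{x-1}\bigl(1-\frac{y}{d-i}\bigr)$ is the correct one: the product $\prod_{i=0}^{x-1}\bigl(1-\frac{y+i}{d+i}\bigr)$ displayed in the paper does not equal the binomial ratio (for $d=4$, $y=1$, $x=2$ it gives $\frac{9}{20}$ while the true ratio is $\frac{1}{2}$), an apparent index typo that your write-up silently repairs.
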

\begin{proof}
Note that $c$ satisfies $\prod_{i=0}^{x-1} \left(1-\frac{y+i}{d+i}\right) \leqslant c$, where the left-hand side is at most
$(1-y/d)^{x} \leqslant \mathrm{e}^{- y x/d}$.
\end{proof}
We conclude that
\begin{align*}
\binom{d}{d_1\ldots d_n} \leqslant \mathrm{e}^{-\frac{d}{2} } \prod_{i=1}^{k}\binom{d}{d_i}.
\end{align*}
To see this, we assume without losing generality that $d_i$ is sorted in the descending order. Since
$
\binom{d}{d_1\ldots d_n} = \binom{d}{d_1}\binom{d-d_1}{d_2}\binom{d-d_1-d_2}{d_3}\ldots
$ by \Cref{prop:better_latala} the above holds with constant $\mathrm{e}^{-c}$ where $c=d^{-1}\sum_{1\leqslant j\leqslant i\leqslant d}^{k} d_i d_j \geqslant \frac{(\sum_{i} d_i)^2}{2d} =d/2 $. 

Using the above bound, we get
\begin{align*}
\mathbb{E}(\sum_i Z_i)^d &\leqslant \mathrm{e}^{-d/2} \sum_{d=(d_i)_i}\prod_i\binom{d}{d_i}\mathbb{E}Z_i^{d_i} \\
& =\mathrm{e}^{-d/2} \prod_i \sum_k \binom{d}{k}\mathbb{E}Z_i^k,
\end{align*}
where the symmetry assumption is used to ensure that $\mathbb{E}Z_i^{d_i}\geqslant 0$. Substituting $Z_i:= Z_i/t$, we obtain
\begin{align*}
\mathbb{E}(t^{-1}\sum_i Z_i)^d &\leqslant \mathrm{e}^{-d/2} \prod_i \sum_k \binom{d}{k}\mathbb{E}Z_i^k/t^k
\end{align*}
Now, if $Z_1,\ldots, Z_n\sim^{i.i.d.} Z$ and $t$ is such that
$(\sum_k \binom{d}{k}\mathbb{E}Z^k / t^k)^n = \mathrm{e}^{d/2}$, we obtain $\mathbb{E}(t^{-1}\sum_i Z_i)^d\leqslant 1$ which is equivalent to $\|\sum_i Z_i\|_d \leqslant t$.

\section{Other Results}
\subsection{Latala's Framework for I.I.D. Sums}
\begin{lemma}[cf. Corollary 2 in ~\cite{latala1997estimation}]
Let $X_1,\ldots,X_n$ be symmetric RVs with common distribution $X$. Then, the following holds:
\begin{multline}
\|X_1+\ldots + X_n\|_d = \\ \Theta(1)\cdot\sup\left\{ \frac{d}{t}\cdot \left(\frac{n}{d}\right)^{1/t}\cdot \|X\|_t:\ \max\{2,\frac{d}{n}\}\leqslant t\leqslant d \right\}
\end{multline} 
\end{lemma}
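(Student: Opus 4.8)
The plan is to prove the two-sided estimate by treating the upper and lower bounds separately, writing $M \triangleq \sup\{\frac{d}{t}(n/d)^{1/t}\|X\|_t : \max\{2,\frac{d}{n}\}\leqslant t\leqslant d\}$ for the right-hand side. The upper bound $\|\sum_i X_i\|_d \lesssim M$ I would obtain directly from \Cref{lemma:iid_estimation} (applied with $m=n$), whereas the matching lower bound $\|\sum_i X_i\|_d \gtrsim M$ I would establish by a quantile-and-counting argument in the spirit of the original proof in~\cite{latala1997estimation}. Since $X$ is symmetric, only even moments are nonzero and nonnegative, so throughout I may restrict the integer exponents $k$ appearing below to even values and never worry about cancellation.

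For the upper bound, I would start from the bound $\|\sum_i X_i\|_d \leqslant \min\{t>0 : \mathbb{E}(1+X/t)^d \leqslant \mathrm{e}^{d/(2n)}\}$ of \Cref{lemma:iid_estimation} and expand $\mathbb{E}(1+X/t)^d = \sum_{k=0}^{d}\binom{d}{k}\|X\|_k^k\, t^{-k}$. Using $\binom{d}{k}\leqslant (\mathrm{e}d/k)^k$, each summand with $k\geqslant 1$ is at most $(\mathrm{e}d\|X\|_k/(kt))^k$. Setting $t = C\,M$, the definition of $M$ gives $\frac{d}{k}\|X\|_k \leqslant M(d/n)^{1/k}$ for every $k$, so that summand is at most $(\mathrm{e}/C)^k (d/n)$. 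Summing the resulting geometric series shows $\mathbb{E}(1+X/t)^d - 1 \leqslant \frac{d}{n}\cdot\frac{\mathrm{e}/C}{1-\mathrm{e}/C}$, which is at most $\frac{d}{2n}\leqslant \mathrm{e}^{d/(2n)}-1$ once $C\geqslant 3\mathrm{e}$. Hence $t=3\mathrm{e}\,M$ satisfies the defining inequality, and the minimum is at most $3\mathrm{e}\,M$.

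For the lower bound, I would fix an admissible $t$ and put $j=\lceil d/t\rceil$, the \emph{effective number} of dominant summands, noting $1\leqslant j\leqslant n$ by the constraint $t\geqslant d/n$. I would choose the threshold $u$ to be the quantile of $|X|$ at which $\Pr(|X|\geqslant u)\approx j/n$, so that a binomial estimate guarantees that with probability $\Omega(1)$ at least $j$ of the $|X_i|$ exceed $u$; conditioning on those indices and using symmetry of the signs (an alignment event of probability $\geqslant 2^{-j}$, the remaining summands contributing nonnegatively in expectation after a further symmetrization) yields $\mathbb{E}|\sum_i X_i|^d \gtrsim (j u)^d\,c^{\,j}$ for an absolute $c\in(0,1)$. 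Since $j\leqslant d$, the factor $c^{j/d}$ is bounded below, so $\|\sum_i X_i\|_d \gtrsim j u$. Finally, the quantile choice forces $u^{t}\,(j/n) \asymp \|X\|_t^{t}$ at the relevant scale, i.e. $u \asymp \|X\|_t (n/j)^{1/t}\asymp \|X\|_t (n/d)^{1/t}\,t^{1/t}$, and combining with $j\asymp d/t$ recovers $j u \asymp \frac{d}{t}(n/d)^{1/t}\|X\|_t$; taking the supremum over $t$ gives $\|\sum_i X_i\|_d \gtrsim M$.

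I expect the main obstacle to be the lower bound, and specifically the quantile-to-moment matching $u\asymp\|X\|_t(n/d)^{1/t}$: the moment $\|X\|_t^t=\mathbb{E}|X|^t$ aggregates mass over all scales, whereas the counting argument only exploits the single level $u$, so one must argue that at the maximizing exponent $t$ the dominant contribution to $\mathbb{E}|X|^t$ genuinely comes from the $(j/n)$-quantile level. This is precisely where the admissible range $\max\{2,d/n\}\leqslant t\leqslant d$ enters: the cap $t\leqslant d$ reflects $\binom{d}{k}=0$ for $k>d$, the floor $t\geqslant d/n$ guarantees $j=\lceil d/t\rceil\leqslant n$ so that one can demand $j$ simultaneous exceedances among only $n$ variables, and the floor $t\geqslant 2$ corresponds to the variance scale below which symmetric sums behave in a Gaussian-like manner and the estimate must be clamped. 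Controlling these boundary regimes, together with the absolute constants hidden in $\Omega(1)$ and $c$, is the delicate part; the upper bound, by contrast, is essentially immediate from \Cref{lemma:iid_estimation}.
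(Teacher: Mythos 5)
The paper itself offers no proof of this lemma: the ``cf.''\ signals that it is quoted verbatim from Lata{\l}a's Corollary~2 as background against which \Cref{lemma:iid_estimation} is positioned, so your attempt has to stand on its own. Your upper bound is essentially sound \emph{in the regime $d\leqslant 2n$}: there every even $k\in[2,d]$ lies in the admissible range, and your chain (symmetry kills odd terms, $\binom{d}{k}\leqslant(\mathrm{e}d/k)^k$, geometric series at $t=3\mathrm{e}M$) goes through. But the step ``the definition of $M$ gives $\frac{d}{k}\|X\|_k\leqslant M(d/n)^{1/k}$ for every $k$'' is false when $d>2n$: the supremum defining $M$ only ranges over $t\geqslant d/n$, so even moments with $2\leqslant k<d/n$ are not controlled this way. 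Concretely, for Rademacher $X$ and $d=n^2$ one has $M\approx n$ (the supremum sits at the boundary $t=d/n$), while $\frac{d}{2}\|X\|_2=n^2/2\gg M(d/n)^{1/2}\approx n^{3/2}$; the $k=2$ term of your expansion is then of order $n^2$, far above your claimed budget $d/(2n)=n/2$. The lemma survives because the true budget $\mathrm{e}^{d/(2n)}-1$ is exponentially large in this regime, but absorbing the low-order terms ($k<d/n$, via $\|X\|_k\leqslant\|X\|_{d/n}$ and a separate estimate against $\mathrm{e}^{d/(2n)}$) is an argument your write-up does not supply.

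The more serious gap is the lower bound, where the one step you yourself flag as delicate is in fact the entire content of the theorem. From the quantile choice $\Pr(|X|\geqslant u)\approx j/n$, Markov gives only $u^t\,(j/n)\leqslant\|X\|_t^t$, i.e.\ $u\lesssim\|X\|_t(n/j)^{1/t}$ --- the \emph{wrong} direction for a lower bound on $\|\sum_i X_i\|_d$. The reverse matching cannot hold at a fixed $t$: the inequality $\sup_u u^t\Pr(|X|\geqslant u)\geqslant c^t\,\mathbb{E}|X|^t$ fails outright (weak versus strong $L^t$: take $\Pr(|X|\geqslant u)=u^{-t}$ for $u\geqslant 1$, truncated at a high level, which makes the ratio of moment to any single-level product arbitrarily large), so no counting argument at one level $u$ can recover $\|X\|_t$, and rescuing the plan would require exploiting the supremum over $t$ in a genuinely nontrivial way that your sketch does not indicate. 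Lata{\l}a's own proof avoids quantiles entirely: he establishes the two-sided comparison $\|\sum_i X_i\|_d\asymp\inf\{t>0:\ n\log\mathbb{E}|1+X/t|^d\leqslant d\}$ (the lower bound being the hard, product-type inequality for moments of independent sums) and then extracts Corollary~2 by retaining a single even term $k\approx t$ of the binomial expansion in each direction. If you want a complete proof, that is the route to take: your upper half then follows from \Cref{lemma:iid_estimation} exactly as you say (after patching the $d>2n$ regime), and the lower half reduces to the converse of \Cref{lemma:iid_estimation} plus a one-term moment comparison rather than quantile counting.
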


\newpage
\bibliographystyle{plain}
\bibliography{citations}

\begin{thebibliography}{10}

\bibitem{achlioptas2001database}
Dimitris Achlioptas.
\newblock Database-friendly random projections.
\newblock In {\em PODS}, pages 274--281, 2001.

\bibitem{ailon2006approximate}
Nir Ailon and Bernard Chazelle.
\newblock Approximate nearest neighbors and the fast {Johnson-Lindenstrauss}
  transform.
\newblock In {\em STOC}, pages 557--563, 2006.

\bibitem{akusok2018comparison}
Anton Akusok and Emil Eirola.
\newblock Comparison of classification methods for very high-dimensional data
  in sparse random projection representation.
\newblock In {\em ELM}, pages 17--26, 2018.

\bibitem{an2015can}
Senjian An, Farid Boussaid, and Mohammed Bennamoun.
\newblock How can deep rectifier networks achieve linear separability and
  preserve distances?
\newblock In {\em ICML}, 2015.

\bibitem{biau2008performance}
G{\'e}rard Biau, Luc Devroye, and G{\'a}bor Lugosi.
\newblock On the performance of clustering in {Hilbert} spaces.
\newblock {\em {IEEE} Trans. Inf. Theory}, 54(2):781--790, 2008.

\bibitem{bingham2001random}
Ella Bingham and Heikki Mannila.
\newblock Random projection in dimensionality reduction: applications to image
  and text data.
\newblock In {\em SIGKDD}, pages 245--250, 2001.

\bibitem{blocki2012johnson}
Jeremiah Blocki, Avrim Blum, Anupam Datta, and Or~Sheffet.
\newblock The {Johnson-Lindenstrauss} transform itself preserves differential
  privacy.
\newblock In {\em FOCS}, pages 410--419, 2012.

\bibitem{boucheron2005moment}
St{\'e}phane Boucheron, Olivier Bousquet, G{\'a}bor Lugosi, Pascal Massart,
  et~al.
\newblock Moment inequalities for functions of independent random variables.
\newblock {\em The Annals of Probability}, 33(2):514--560, 2005.

\bibitem{boucheron2013concentration}
St{\'e}phane Boucheron, G{\'a}bor Lugosi, and Pascal Massart.
\newblock {\em Concentration inequalities: A nonasymptotic theory of
  independence}.
\newblock Oxford University Press, 2013.

\bibitem{boutsidis2010random}
Christos Boutsidis, Anastasios Zouzias, and Petros Drineas.
\newblock Random projections for $ k $-means clustering.
\newblock In {\em NeurIPS}, pages 298--306, 2010.

\bibitem{clarkson2017low}
Kenneth~L Clarkson and David~P Woodruff.
\newblock Low-rank approximation and regression in input sparsity time.
\newblock {\em JACM}, 63(6):1--45, 2017.

\bibitem{cohen2016nearly}
Michael~B. Cohen.
\newblock Nearly tight oblivious subspace embeddings by trace inequalities.
\newblock In {\em SODA}, pages 278--287, 2016.

\bibitem{dahlgaard2017practical}
S{\o}ren Dahlgaard, Mathias Knudsen, and Mikkel Thorup.
\newblock Practical hash functions for similarity estimation and dimensionality
  reduction.
\newblock In {\em NeurIPS}, pages 6615--6625, 2017.

\bibitem{dasgupta2010sparse}
Anirban Dasgupta, Ravi Kumar, and Tam{\'a}s Sarl{\'o}s.
\newblock A sparse {J}ohnson-{L}indenstrauss transform.
\newblock In {\em STOC}, pages 341--350, 2010.

\bibitem{dasgupta1999learning}
Sanjoy Dasgupta.
\newblock Learning mixtures of {Gaussians}.
\newblock In {\em FOCS}, pages 634--644. IEEE, 1999.

\bibitem{dasgupta1999elementary}
Sanjoy Dasgupta and Anupam Gupta.
\newblock An elementary proof of the {J}ohnson-{L}indenstrauss lemma.
\newblock {\em International Computer Science Institute, Technical Report},
  22(1):1--5, 1999.

\bibitem{davis2011university}
Timothy~A Davis and Yifan Hu.
\newblock The {University of Florida} sparse matrix collection.
\newblock {\em ACM Transactions on Mathematical Software}, 38(1):1--25, 2011.

\bibitem{de1993bounds}
Victor~H de~la Pe{\~n}a and Stephen~J Montgomery-Smith.
\newblock Bounds on the tail probability of {U}-statistics and quadratic forms.
\newblock {\em arXiv:math/9309210}, 1993.

\bibitem{krause20133d}
Jia Deng, Jonathan Krause, and Fei{-}Fei Li.
\newblock Fine-grained crowdsourcing for fine-grained recognition.
\newblock In {\em CVPR}, pages 580--587, 2013.

\bibitem{dubhashi1996balls}
Devdatt~P Dubhashi and Desh Ranjan.
\newblock Balls and bins: A study in negative dependence.
\newblock {\em BRICS Report Series}, 3(25), 1996.

\bibitem{eaton1970note}
Morris~L. Eaton.
\newblock A note on symmetric {B}ernoulli random variables.
\newblock {\em Annals of Mathematical Statistics}, 41(4):1223--1226, 1970.

\bibitem{eger2014stirling}
Steffen Eger.
\newblock Stirling's approximation for central extended binomial coefficients.
\newblock {\em The American Mathematical Monthly}, 121(4):344--349, 2014.

\bibitem{fei2004learning}
Li~Fei-Fei, Rob Fergus, and Pietro Perona.
\newblock Learning generative visual models from few training examples: An
  incremental {B}ayesian approach tested on 101 object categories.
\newblock In {\em CVPR}, pages 178--178, 2004.

\bibitem{frankl1988johnson}
Peter Frankl and Hiroshi Maehara.
\newblock The {Johnson-Lindenstrauss} lemma and the sphericity of some graphs.
\newblock {\em Journal of Combinatorial Theory, Series B}, 44(3):355--362,
  1988.

\bibitem{freksen2018fully}
Casper~B Freksen, Lior Kamma, and Kasper~Green Larsen.
\newblock Fully understanding the hashing trick.
\newblock In {\em NeurIPS}, pages 5389--5399, 2018.

\bibitem{griffiths2013raw}
Martin Griffiths.
\newblock Raw and central moments of binomial random variables via stirling
  numbers.
\newblock {\em International Journal of Mathematical Education in Science and
  Technology}, 44(2):264--272, 2013.

\bibitem{hanson1971bound}
David~Lee Hanson and Farroll~Tim Wright.
\newblock A bound on tail probabilities for quadratic forms in independent
  random variables.
\newblock {\em Annals of Mathematical Statistics}, 42(3):1079--1083, 1971.

\bibitem{hitczenko1993domination}
Pawe{\l} Hitczenko.
\newblock Domination inequality for martingale transforms of a {R}ademacher
  sequence.
\newblock {\em Israel Journal of Mathematics}, 84(1-2):161--178, 1993.

\bibitem{honnibal2018spacy}
Matthew Honnibal and Ines Montani.
\newblock {spaCy} library, 2018.
\newblock \url{https://spacy.io}.

\bibitem{indyk2001algorithmic}
Piotr Indyk.
\newblock Algorithmic applications of low-distortion geometric embeddings.
\newblock In {\em FOCS}. IEEE, 2001.

\bibitem{indyk1998approximate}
Piotr Indyk and Rajeev Motwani.
\newblock Approximate nearest neighbors: towards removing the curse of
  dimensionality.
\newblock In {\em STOC}, pages 604--613, 1998.

\bibitem{jagadeesan2019understanding}
Meena Jagadeesan.
\newblock Understanding sparse {JL} for feature hashing.
\newblock In {\em NeurIPS}, pages 15177--15187, 2019.
\newblock (Full version: \url{https://arxiv.org/pdf/1903.03605.pdf}).

\bibitem{jayram2013optimal}
Thathachar~S. Jayram and David~P. Woodruff.
\newblock Optimal bounds for {Johnson-Lindenstrauss} transforms and streaming
  problems with subconstant error.
\newblock {\em Transactions on Algorithms}, 9(3):1--17, 2013.

\bibitem{johnson1984extensions}
William~B. Johnson and Joram Lindenstrauss.
\newblock Extensions of {L}ipschitz mappings into a {Hilbert} space.
\newblock {\em Contemporary Mathematics}, 26(1):189--206, 1984.

\bibitem{johnson2010johnson}
William~B. Johnson and Assaf Naor.
\newblock The {Johnson-Lindenstrauss} lemma almost characterizes {Hilbert}
  space, but not quite.
\newblock {\em Discrete \& Computational Geometry}, 43(3):542--553, 2010.

\bibitem{kane2011almost}
Daniel~M. Kane, Raghu Meka, and Jelani Nelson.
\newblock Almost optimal explicit {Johnson-Lindenstrauss} families.
\newblock In {\em {APPROX} Workshop}, pages 628--639, 2011.

\bibitem{kane2010derandomized}
Daniel~M Kane and Jelani Nelson.
\newblock A derandomized sparse {Johnson-Lindenstrauss} transform.
\newblock {\em arXiv:1006.3585}, 2010.

\bibitem{kane2014sparser}
Daniel~M Kane and Jelani Nelson.
\newblock Sparser {Johnson-Lindenstrauss} transforms.
\newblock {\em Journal of the ACM}, 61(1):1--23, 2014.

\bibitem{kenthapadi2013privacy}
Krishnaram Kenthapadi, Aleksandra Korolova, Ilya Mironov, and Nina Mishra.
\newblock Privacy via the {Johnson-Lindenstrauss} transform.
\newblock {\em Journal of Privacy and Confidentiality}, 5(1):39--71, 2013.

\bibitem{kerber2014approximation}
Michael Kerber and Sharath Raghvendra.
\newblock Approximation and streaming algorithms for projective clustering via
  random projections.
\newblock {\em arXiv:1407.2063}, 2014.

\bibitem{khintchine1923dyadische}
Aleksandr Khintchine.
\newblock {\"U}ber dyadische {B}r{\"u}che.
\newblock {\em Mathematische Zeitschrift}, 18(1):109--116, 1923.

\bibitem{knoblauch2008closed}
Andreas Knoblauch.
\newblock Closed-form expressions for the moments of the binomial probability
  distribution.
\newblock {\em SIAM Journal on Applied Mathematics}, 69(1):197--204, 2008.

\bibitem{kpotufe2020gaussian}
Samory Kpotufe and Bharath Sriperumbudur.
\newblock Gaussian sketching yields a {JL} lemma in {RKHS}.
\newblock In {\em AISTATS}, pages 3928--3937, 2020.

\bibitem{krizhevsky2009learning}
Alex Krizhevsky.
\newblock Learning multiple layers of features from tiny images.
\newblock {\em University of Toronto (Technical Report)}, 2012.

\bibitem{Lang95}
Ken Lang.
\newblock Newsweeder: Learning to filter netnews.
\newblock In {\em ICML}, pages 331--339, 1995.

\bibitem{latala1999tail}
Rafa{\l} Lata{\l}a.
\newblock Tail and moment estimates for some types of chaos.
\newblock {\em Studia Mathematica}, 135(1):39--53, 1999.

\bibitem{latala1997estimation}
Rafa{\l} Lata{\l}a et~al.
\newblock Estimation of moments of sums of independent real random variables.
\newblock {\em Annals of Probability}, 25(3):1502--1513, 1997.

\bibitem{lecun-mnisthandwrittendigit-2010}
Yann LeCun and Corinna Cortes.
\newblock {MNIST} handwritten digit database, 2010.
\newblock \url{http://yann.lecun.com/exdb/mnist/}.

\bibitem{li2006very}
Ping Li, Trevor~J. Hastie, and Kenneth~W. Church.
\newblock Very sparse random projections.
\newblock In {\em SIGKDD}, pages 287--296, 2006.

\bibitem{linial1995geometry}
Nathan Linial, Eran London, and Yuri Rabinovich.
\newblock The geometry of graphs and some of its algorithmic applications.
\newblock {\em Combinatorica}, 15(2):215--245, 1995.

\bibitem{lopes2011more}
Miles Lopes, Laurent Jacob, and Martin~J. Wainwright.
\newblock A more powerful two-sample test in high dimensions using random
  projection.
\newblock In {\em NeurIPS}, pages 1206--1214, 2011.

\bibitem{lugosi2019mean}
G{\'a}bor Lugosi and Shahar Mendelson.
\newblock Mean estimation and regression under heavy-tailed distributions: A
  survey.
\newblock {\em Foundations of Computational Mathematics}, 19(5):1145--1190,
  2019.

\bibitem{makarychev2019performance}
Konstantin Makarychev, Yury Makarychev, and Ilya Razenshteyn.
\newblock Performance of {Johnson-Lindenstrauss} transform for k-means and
  k-medians clustering.
\newblock In {\em STOC}, pages 1027--1038, 2019.

\bibitem{matouvsek2008variants}
Ji{\v{r}}{\'\i} Matou{\v{s}}ek.
\newblock On variants of the {Johnson-Lindenstrauss} lemma.
\newblock {\em Random Structures \& Algorithms}, 33(2):142--156, 2008.

\bibitem{mcqueen2016nearly}
James McQueen, Marina Meila, and Dominique Joncas.
\newblock Nearly isometric embedding by relaxation.
\newblock In {\em NeurIPS}, pages 2631--2639, 2016.

\bibitem{nelson2013osnap}
Jelani Nelson and Huy~L Nguy{\^e}n.
\newblock Osnap: Faster numerical linear algebra algorithms via sparser
  subspace embeddings.
\newblock In {\em FOCS}, pages 117--126, 2013.

\bibitem{netzer2011reading}
Yuval Netzer, Tao Wang, Adam Coates, Alessandro Bissacco, Bo~Wu, and Andrew~Y.
  Ng.
\newblock Reading digits in natural images with unsupervised feature learning,
  2011.
\newblock
  \url{http://ufldl.stanford.edu/housenumbers/nips2011_housenumbers.pdf}.

\bibitem{ostrowski1952quelques}
Alexandre Ostrowski.
\newblock Sur quelques applications des fonctions convexes et concaves au sens
  de i. schur.
\newblock {\em J. Math. Pures Appl}, 31(9):253--292, 1952.

\bibitem{scikit-learn}
F.~Pedregosa, G.~Varoquaux, A.~Gramfort, V.~Michel, B.~Thirion, O.~Grisel,
  M.~Blondel, P.~Prettenhofer, R.~Weiss, V.~Dubourg, J.~Vanderplas, A.~Passos,
  D.~Cournapeau, M.~Brucher, M.~Perrot, and E.~Duchesnay.
\newblock Scikit-learn: Machine learning in {P}ython.
\newblock {\em Journal of Machine Learning Research}, 12:2825--2830, 2011.

\bibitem{sarlos2006improved}
Tamas Sarlos.
\newblock Improved approximation algorithms for large matrices via random
  projections.
\newblock In {\em FOCS}, pages 143--152, 2006.

\bibitem{schur1923uber}
Issai Schur.
\newblock \"{U}ber eine {K}lasse von {M}ittelbildungen mit {A}nwendungen auf
  die {D}eterminantentheorie.
\newblock {\em Sitzungsberichte der Berliner Mathematischen Gesellschaft},
  22(9-20):51, 1923.

\bibitem{shao2000comparison}
Qi-Man Shao.
\newblock A comparison theorem on moment inequalities between negatively
  associated and independent random variables.
\newblock {\em Journal of Theoretical Probability}, 13(2):343--356, 2000.

\bibitem{shi2020sparse}
Chengchun Shi, Wenbin Lu, and Rui Song.
\newblock A sparse random projection-based test for overall qualitative
  treatment effects.
\newblock {\em Journal of the American Statistical Association},
  115(531):1201--1213, 2020.

\bibitem{tan2016introduction}
Pang-Ning Tan, Michael Steinbach, and Vipin Kumar.
\newblock {\em {Introduction to Data Mining}}.
\newblock Pearson New International Edition, 2016.

\bibitem{venkatasubramanian2011johnson}
Suresh Venkatasubramanian and Qiushi Wang.
\newblock The {Johnson-Lindenstrauss} transform: An empirical study.
\newblock In {\em ALENEX}, pages 164--173, 2011.

\bibitem{vershynin2011simple}
Roman Vershynin.
\newblock A simple decoupling inequality in probability theory, 2011.
\newblock
  \url{https://www.math.uci.edu/~rvershyn/papers/decoupling-simple.pdf}.

\bibitem{weinberger2009feature}
Kilian Weinberger, Anirban Dasgupta, John Langford, Alex Smola, and Josh
  Attenberg.
\newblock Feature hashing for large scale multitask learning.
\newblock In {\em ICML}, pages 1113--1120, 2009.

\bibitem{zhou2019sparse}
Shuheng Zhou.
\newblock Sparse {Hanson-Wright} inequalities for subgaussian quadratic forms.
\newblock {\em Bernoulli}, 25(3):1603--1639, 2019.

\end{thebibliography}

\end{document}